\documentclass[11pt]{article}

\usepackage[margin=0.8in]{geometry}
\usepackage{amsmath,amssymb,amsthm,mathtools}
\usepackage{bbm}
\usepackage[linesnumbered,ruled,vlined]{algorithm2e}
\usepackage{float}
\usepackage[dvipsnames]{xcolor}
\usepackage[most]{tcolorbox}
\usepackage{titletoc}
\usepackage{graphicx}
\usepackage{booktabs,tabularx,array,multirow,colortbl}
\usepackage{xspace}
\usepackage{enumitem}
\usepackage{microtype}
\usepackage{caption}
\usepackage{natbib}
\usepackage{hyperref}

\graphicspath{{figures/}{./}}

\definecolor{mainblue}{HTML}{1F5C99}
\definecolor{softblue}{HTML}{EEF6FF}
\definecolor{riskred}{HTML}{B23A48}
\definecolor{editgreen}{HTML}{167A57}
\definecolor{budgetgold}{HTML}{9A6A00}
\definecolor{darkgray}{HTML}{30343B}
\definecolor{softgray}{HTML}{F7F8FA}

\definecolor{mathblue}{HTML}{3B6FB6}
\definecolor{mmluteal}{HTML}{2A9D8F}
\definecolor{poporange}{HTML}{D97706}
\definecolor{gpqapurple}{HTML}{7E57C2}

\colorlet{mathsoft}{mathblue!8!white}
\colorlet{mmlusoft}{mmluteal!8!white}
\colorlet{popsoft}{poporange!8!white}
\colorlet{gpqasoft}{gpqapurple!8!white}

\hypersetup{
    colorlinks=true,
    hypertexnames=false,
    linkcolor=mainblue,
    citecolor=editgreen,
    urlcolor=mainblue
}

\newtheorem{theorem}{Theorem}

\SetKwInput{KwRequire}{Require}

\newtcolorbox{problembox}[1]{
    enhanced,
    breakable,
    colback=softgray,
    colframe=darkgray,
    coltitle=white,
    colbacktitle=darkgray,
    fonttitle=\bfseries,
    title={#1},
    boxrule=0.8pt,
    arc=2pt,
    left=7pt,
    right=7pt,
    top=7pt,
    bottom=7pt,
    before skip=10pt,
    after skip=10pt
}

\newtcolorbox{questionbox}{
    enhanced,
    breakable,
    colback=softgray,
    colframe=darkgray,
    boxrule=0.8pt,
    arc=2pt,
    left=8pt,
    right=8pt,
    top=8pt,
    bottom=8pt,
    before skip=10pt,
    after skip=10pt
}

\newtcolorbox{insightbox}[1]{
    enhanced,
    breakable,
    colback=softgray,
    colframe=darkgray,
    coltitle=white,
    colbacktitle=darkgray,
    fonttitle=\bfseries,
    title={#1},
    boxrule=0.7pt,
    arc=2pt,
    left=7pt,
    right=7pt,
    top=7pt,
    bottom=7pt,
    before skip=8pt,
    after skip=8pt
}

\newtcolorbox{promptouter}[1]{
    enhanced,
    breakable,
    colback=softgray,
    colframe=darkgray,
    colbacktitle=darkgray,
    coltitle=white,
    fonttitle=\bfseries,
    title={#1},
    boxrule=0.85pt,
    arc=2pt,
    left=8pt,
    right=8pt,
    top=8pt,
    bottom=8pt,
    before skip=10pt,
    after skip=12pt
}

\newtcolorbox{datasetprompt}[2][]{
    enhanced,
    breakable,
    colback=#2!5!white,
    colframe=#2,
    colbacktitle=#2!14!white,
    coltitle=black,
    fonttitle=\bfseries,
    boxrule=0.75pt,
    arc=2pt,
    left=7pt,
    right=7pt,
    top=7pt,
    bottom=7pt,
    before skip=8pt,
    after skip=8pt,
    #1
}

\newcommand{\E}{\mathbb{E}}
\newcommand{\Prob}[1]{\mathrm{Pr}\!\left[#1\right]}
\newcommand{\1}{\mathbbm{1}}
\newcommand{\cZ}{\mathcal{Z}}
\newcommand{\cD}{\mathcal{D}}

\newcommand{\ErrI}{\mathrm{Err}_{\mathrm{I}}}
\newcommand{\ErrII}{\mathrm{Err}_{\mathrm{II}}}
\newcommand{\hErrI}{\widehat{\mathrm{Err}}_{\mathrm{I}}}
\newcommand{\hErrII}{\widehat{\mathrm{Err}}_{\mathrm{II}}}
\newcommand{\EDO}{\textsc{EDO}}
\newcommand{\EDIT}{\textsc{EDIT}}

\newcommand{\EDITREP}{\textsc{EDIT-REP}}
\newcommand{\EDITsup}{\text{\textsc{EDIT}}}
\newcommand{\piEDIT}{\pi^{\EDITsup}}

\newcommand{\piEDITti}{\pi_{t,i}^{\EDITsup}}
\newcommand{\nt}{n_t}
\newcommand{\Zti}{Z_{t,i}}

\newcommand{\Ati}{A_{t,i}}
\newcommand{\sumi}{\sum_{i=1}^{\nt}}
\newcommand{\sumt}{\sum_{t=1}^{T}}
\newcommand{\scoretheta}{\widetilde\eta_{\theta}}
\newcommand{\scoreti}{\widetilde\eta_{\theta_t}(\Zti)}

\newcommand{\costti}{c(\Zti)}

\newcommand{\cX}{\mathcal{X}} 
\newcommand{\cY}{\mathcal{Y}}
\DeclareMathOperator*{\argmin}{arg\,min}

\newcommand{\EDITLLM}{\textsc{EDIT-LLM}\xspace}

\newcommand{\EDITRAND}{\textsc{EDIT-RAND}\xspace}
\newcommand{\LLMSelect}{\textsc{LLM-Select}\xspace}

\newcolumntype{Y}{>{\raggedright\arraybackslash}X}
\newcolumntype{C}[1]{>{\centering\arraybackslash}p{#1}}

\title{Edit Discovery in Human--AI Interactions}
\author{
Alireza Mirrokni\textsuperscript{1} \quad
Shayan Kiyani\textsuperscript{2} \quad
George Pappas\textsuperscript{2} \quad
Hamed Hassani\textsuperscript{2}\\[0.6em]
\small \textsuperscript{1}EPFL \qquad \textsuperscript{2}University of Pennsylvania\\
\small \texttt{seyedalireza.mirroknibanadaki@epfl.ch}\\
\small \texttt{\{shayank,pappasg,hassani\}@seas.upenn.edu}
}
\date{}

\begin{document}
\maketitle

\begin{abstract}
Human--AI interactions are becoming a rich source of data for improving AI systems: while many model responses are already satisfactory, others expose failures that could become valuable supervision if identified and corrected. We study this setting through the notion of an \emph{editor}: a more capable evaluation process that can inspect a response and either confirm it or modify it, but whose invocation is costly. The editor may be a human expert, a stronger AI system, or the same model equipped with additional computation, context, tools, or information. The key challenge is to decide which interactions to send to the editor before knowing whether an edit will actually be made, since invoking the editor itself consumes limited resources such as expert time, inference cost, latency, or tool-use overhead. We formulate this problem as \emph{Edit Discovery Optimization} (EDO), which seeks to minimize edit opportunities missed among skipped interactions while controlling unnecessary confirmations and respecting an editor-cost budget. We characterize the population-optimal policy and show that it takes the form of a cost-adjusted threshold on the posterior probability of an edit. Building on this structure, we develop EDIT (\emph{Edit Discovery via Iterative Thresholding}), an online algorithm for the partial-feedback setting in which edit labels are revealed only for selected interactions. EDIT combines an adaptive edit score, a confirmation-control threshold, and a cost-aware selection rule; it satisfies the editor budget at every round and controls the average confirmation rate up to a vanishing finite-horizon term, without distributional assumptions on the interaction stream. Across four reasoning and factual question-answering benchmarks, EDIT achieves a better trade-off between editor cost, Type-I error, and Type-II error than direct LLM-based batch selection.\footnote{Code is available at \url{https://github.com/alirezamirrokni/Edit-Discovery-in-HAI-Interactions}.}
\end{abstract}

\section{Introduction}
Data has been instrumental to the progress in modern artificial intelligence. Until recently, this largely meant collecting human- or naturally-generated content, including text, code, images, and other forms of information. However, the most readily available sources of such data have become increasingly saturated: much of the public, high-quality content has already been used, while obtaining new data of comparable quality is becoming more difficult. This calls for a rethinking of what should count as useful data for the next generation of AI systems.

One emerging source is the rapidly growing stream of human--AI interaction data, which we refer to as HAI data. In many settings, users now interact with AI systems that answer questions, handle requests, write code, summarize documents, and provide customer support. This source of data is fundamentally different from traditional supervised data. The responses are generated by the deployed system itself, and therefore are not uniformly useful as new supervision: many are already satisfactory or reflect capabilities the system has already acquired, while others expose mistakes, ambiguity, incompleteness, or misalignment with user intent. If these failure cases can be identified and corrected, however, they can be transformed from raw interaction logs into valuable supervision for improving future systems.

To formalize this correction process, we assume access to an \emph{editor}: a more capable evaluation process that, when invoked, can inspect the original interaction and either confirm the response or modify it. The editor may be a human expert, a stronger but more expensive AI system, or even the same model given additional computation, context, tools, or access to external information. Interactions on which the editor would make a modification are precisely the cases that can yield corrective supervision.

The difficulty is deciding which interactions are worth sending to the editor. Consider a large e-commerce platform such as Amazon, where customer inquiries may increasingly be handled through AI-generated responses. Such a platform could generate tens of thousands of customer--AI interactions each day. Sending every interaction to the editor would maximize the opportunity to discover useful corrections, but could require substantial resources: human experts consume time and specialized effort, stronger AI systems incur additional inference cost, and enhanced evaluation pipelines may introduce latency or tool-use overhead. At the other extreme, invoking the editor too sparingly risks missing precisely the interactions that would provide the most valuable corrective signal. The challenge is therefore to allocate limited editorial resources toward interactions that are most likely to benefit from correction, while avoiding unnecessary editor invocations on responses that would simply be confirmed. This motivates the central question of this paper:

\begin{center}
\emph{Can we selectively identify the interactions on which the editor would make an edit, while substantially reducing the cost of invoking the editor?}
\end{center}
We answer this question in two steps. First, we formulate \emph{Edit Discovery Optimization} (EDO), a constrained selection problem that formalizes the trade-off between discovering edit opportunities, avoiding unnecessary confirmations, and respecting the available editor budget. We characterize the population-optimal solution and show that it takes the form of a cost-adjusted threshold on the posterior probability of an edit. Building on this structure, we then develop EDIT (\emph{Edit Discovery via Iterative Thresholding}), a practical online algorithm for the setting in which edit outcomes are observed only for interactions that are actually sent to the editor. This partial-feedback setting creates an additional algorithmic challenge: edit labels are revealed only for selected interactions, yet the discovery errors of interest depend on both selected and skipped interactions, so in particular the policy never observes online the skipped labels that determine missed edits. The policy must therefore learn where edits are likely to occur from selectively revealed feedback while regulating its selections without access to the full set of labels. EDIT addresses this by combining an adaptive edit score with a dynamically updated confirmation-control threshold and a cost-aware selection rule.

\paragraph{Contributions.}

\begin{figure*}[t!]
    \centering
    \includegraphics[width=0.985\textwidth]{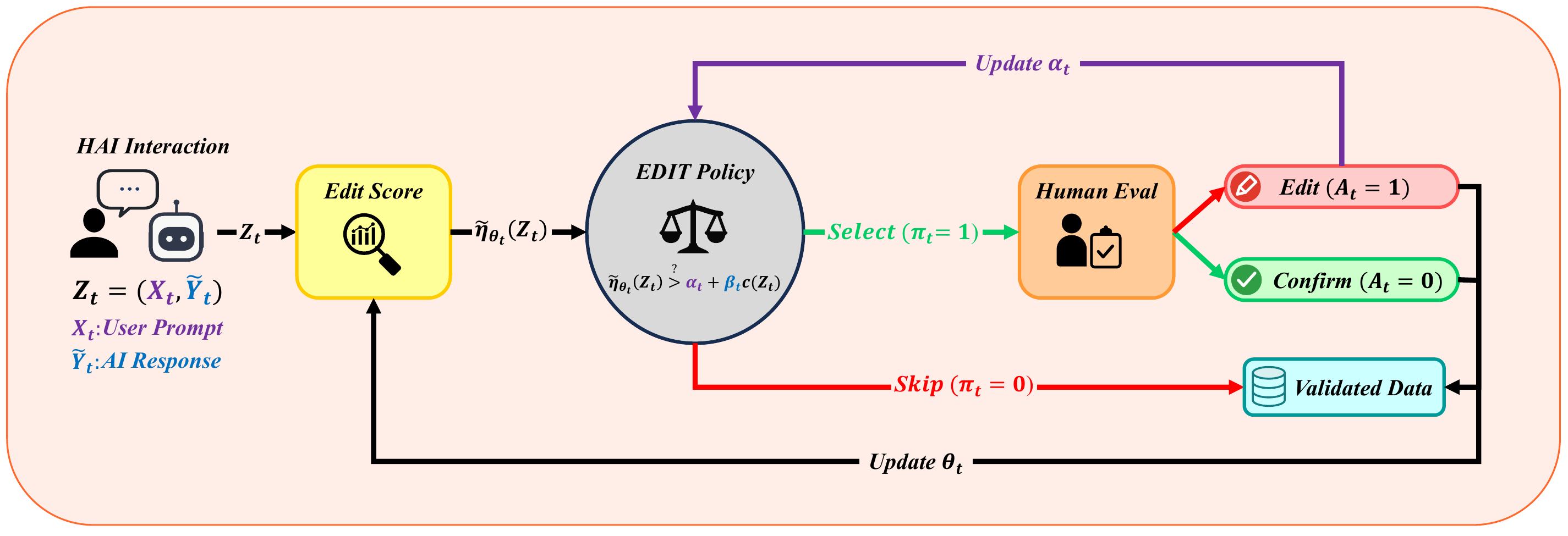}
    \caption{Overview of \EDIT{}.
Each HAI interaction \(Z_t=(X_t,\widetilde Y_t)\) receives an estimated edit score
\(\widetilde\eta_{\theta_t}(Z_t)\). Based on this score, the policy \(\pi_t\) determines whether the
interaction is sent to the editor or skipped. Selected responses are either confirmed or edited,
yielding validated data and feedback for updating the score and policy online.
    }
    \label{fig:edit-overview}
\end{figure*}


Our main contributions are as follows:
\begin{itemize}[leftmargin=1.6em,itemsep=0.35em,topsep=0.35em]
    \item We introduce Edit Discovery Optimization (EDO), a formal framework for selecting HAI
    interactions under limited editor resources. The formulation distinguishes between confirmations
    among selected interactions and missed edits among skipped interactions, while explicitly
    accounting for interaction-specific editor costs.

    \item We characterize the optimal population policy and show that it selects an interaction when
    its posterior edit probability exceeds a cost-adjusted threshold. This result identifies the
    posterior probability of editor modification as the appropriate edit score and clarifies how
    editor cost should affect the decision.

    \item Building on this characterization, we develop \EDIT{}, an online selection method that
    combines an edit score, an adaptive confirmation-control threshold, and a batchwise cost
    multiplier. We prove that \EDIT{} satisfies the editor-budget constraint at every round and
    controls the average per-round Type-I error up to a finite-horizon term that vanishes with the
    number of rounds.

    \item We evaluate \EDIT{} on four reasoning and factual question-answering benchmarks using
    Llama-3.3-70B-Instruct, with additional experiments using Qwen3-32B, response-length
    editor costs, and structured distribution shifts. The results show that informative edit scores,
    particularly the learned representation score, reduce missed edits relative to direct
    LLM-based batch selection  while maintaining reliable
    Type-I behavior and feasible editor cost.
\end{itemize}

\section{Related Work}\label{sec:related-work-main}

We summarize the closest work here and provide an extended review in Appendix~\ref{sec:related-work}.

\paragraph{Efficient evaluation and selective supervision.}
Prediction-powered inference combines model predictions with a small labeled sample to estimate population quantities with valid uncertainty \citep{angelopoulos2023ppi}, while cost-optimal active evaluation allocates an annotation budget between cheap weak raters and expensive strong raters \citep{angelopoulos2025costoptimal}. Active learning selects informative examples for model improvement \citep{settles2009active}, PAC labeling combines expert and AI labels while bounding overall dataset labeling error \citep{candes2026pac}, and learning to defer allocates prediction responsibility by accounting for the complementary capabilities of models and experts \citep{mozannar2020defer}.

\paragraph{Human--AI collaborative procedures.}
Conformal prediction sets narrow experts' choices while controlling exclusion of the correct answer \citep{straitouri2023expert}. Counterfactual extensions optimize assistance through online learning using nested sets and monotonicity of expert decisions \citep{straitouri2024counterfactual}. Collaborative uncertainty quantification prunes and augments human-proposed sets to preserve correct judgments and recover overlooked answers \citep{noorani2025collaborative, noorani2026multi}. Related frameworks use weak scores to accept, reject, or request strong verification \citep{kiyani2026verification}, and selectively acquire costly support while controlling missed beneficial assistance \citep{kiyani2026support}. Together, these approaches combine human and computational effort to improve decision quality.

\paragraph{Calibrated selection and online control.}
Conformal selection uses calibrated $p$-values to select instances whose unobserved outcomes satisfy a target criterion while controlling false discoveries \citep{jin2023selection}. Conformal Alignment extends this perspective to foundation-model outputs, selecting responses that satisfy a specified trustworthiness criterion \citep{gui2024conformalalignment}. Related extensions address interactive data analysis \citep{gui2025acs}, online conformal selection with irreversible decisions \citep{liu2025ocsarc}, and more general bounded risk criteria \citep{bai2026score}. Adaptive conformal inference uses sequential feedback to adjust prediction thresholds and regulate coverage under distribution shift \citep{gibbs2021adaptive}, while conformal risk control extends calibration beyond coverage to expected monotone losses \citep{angelopoulos2024risk}.

\section{Fundamentals of Edit Discovery}\label{sec:fundamentals}
In this section, we formalize Edit Discovery Optimization (\EDO{}) and characterize its optimal population policy. The formulation makes explicit the two ways a selection policy can fail: spending editor effort on responses ultimately confirmed without modification, or skipping responses that would have benefited from editing. It also incorporates the cost of invoking the editor under limited evaluation resources. The resulting analysis characterizes how selection should depend on edit likelihood and cost and reveals the structure that our finite-sample method seeks to emulate.

\subsection{Problem Formulation}\label{subsec:problem-formulation}
Let \(\cX\) and \(\cY\) be the prompt and answer spaces. For each prompt \(X\in\cX\), the AI system produces a response \(\widetilde Y\in\cY\). Before editor feedback, the observable item is the prompt--response pair
\[
    Z=(X,\widetilde Y)\in \cZ,
    \qquad
    \cZ:=\cX\times\cY .
\]
If the item is selected, the editor inspects the AI response and provides a validated final answer \(Y\in\cY\). When the response is unsatisfactory, this answer may correct an error, resolve ambiguity, complete missing information, or better align the response with the user's intent. Invoking the editor consumes limited resources such as expert time, annotation budget, or latency, so the goal is to select interactions most likely to yield useful corrections rather than review every response.

For each selected item, define the edit indicator as the function
\[
A(Z,Y)
:=
\begin{cases}
1, & \text{if the editor modifies the AI answer, } Y\neq \widetilde Y,\\
0, & \text{if the editor confirms the AI answer, } Y=\widetilde Y.
\end{cases}
\]
For notational brevity, we write \(A:=A(Z,Y)\) when the underlying interaction and validated answer
are clear from context. Thus, \(A=1\) marks an informative edit outcome in which the interaction reveals a
failure mode and produces corrective data that can be used, for example, as supervision for subsequent
model training. Conversely, \(A=0\) indicates that the editor accepts the AI response without modification.
Such confirmations may still be useful for auditing system behavior and for reinforcing capabilities that
the model already performs reliably.

\paragraph{Central population quantity.}
Efficient edit discovery requires predicting which AI responses are most likely to trigger an editor modification before the editor is invoked. The central population quantity is therefore the posterior edit probability
\[
    \eta(z):=\Prob{A=1\mid Z=z}.
\]
For a given interaction \(z\), this quantity measures the probability that invoking the editor will result in an edit rather than a confirmation. It therefore provides a natural measure of the expected corrective value of selecting that interaction. Responses with larger values of \(\eta(z)\) are more likely to produce corrective data, whereas responses with smaller values are more likely to be confirmed without modification. As we show in Section~\ref{subsec:population-policy}, this quantity determines the ordering of interactions under the population-optimal policy, up to an adjustment that accounts for the cost of invoking the editor.

\paragraph{Editor cost.}
Each interaction has a finite editor cost \(c(z)>0\), known at selection time. It may encode human time, monetary expense, latency, domain expertise, or any normalized resource needed to obtain feedback in practice. For instance, asking the editor to inspect a long mathematical derivation can be substantially more costly than checking a short factual answer. Allowing \(c\) to depend on \(z\) lets the policy trade off the expected value of an edit against the total effort required to obtain it.

\paragraph{Selection policy and discovery errors.}
A population selection policy is a measurable function
\(
    \pi:\cZ\to[0,1]
\),
where \(\pi(z)\) is the probability that item \(z\) is sent to the editor. Conditional on
\(Z=z\), let \(S_\pi\in\{0,1\}\) denote the realized selection decision, with
\(\Prob{S_\pi=1\mid Z=z}=\pi(z)\), generated independently of \(A\) given \(Z\). Thus
\(\pi(z)=1\) means the item is  selected, \(\pi(z)=0\) means it is  skipped, and
intermediate values allow randomization. Selecting broadly consumes scarce resources on responses that the editor may simply
confirm, whereas selecting narrowly can miss failures that would have produced valuable corrections. Edit
discovery therefore seeks to concentrate selection on likely edits while controlling both the
confirmation burden and total editor cost.

We define the Type-I error of a policy \(\pi\) as
\begin{equation}
\label{eq:population-type-I-error}
    \ErrI(\pi):=\Prob{A=0\mid S_\pi=1},
\end{equation}
the confirmation rate among selected examples. This is the probability that a selected item is already
correct and  does not require editor modification. Similarly, we define the Type-II error of
\(\pi\) as
\begin{equation}
\label{eq:population-type-II-error}
    \ErrII(\pi):=\Prob{A=1\mid S_\pi=0},
\end{equation}
the edit rate among unselected examples. This is the probability that a skipped item would have
produced an editor correction. The goal is to minimize Type-II error while controlling Type-I error and
respecting the available editor budget.

\begin{problembox}{Edit Discovery Optimization (\EDO)}
Let \(0\le \varepsilon<1\) denote the Type-I error tolerance, and let
\(B\ge0\) be the population editor budget. The population edit-discovery problem is\footnotemark
\begin{equation}
\tag{\textsc{EDO}}
\label{eq:edo}
\begin{aligned}
\underset{\pi:\cZ\to[0,1]}{\text{minimize}}
\quad &
\Prob{A=1\mid S_\pi=0}
\\
\text{subject to}\quad
&
\Prob{A=0\mid S_\pi=1}\le \varepsilon,
\\
&
\E[\pi(Z)c(Z)]\le B .
\end{aligned}
\end{equation}
\end{problembox}
\footnotetext{Throughout the population analysis, whenever a conditional probability is represented
as a ratio whose numerator and denominator are both zero, we use the convention \(0/0=0\).}
The parameter \(\varepsilon\) specifies the Type-I error tolerance, equivalently the maximum confirmation rate allowed among selected interactions.
Smaller values require the selected set to contain a higher fraction of items that actually lead to
edits, thereby placing greater emphasis on edit precision. The budget \(B\) controls the total editor cost. Thus, \EDO{} provides a  decision-theoretic formulation of selective editor invocation, making it possible to study how corrective value, selection reliability, and resource constraints jointly determine which interactions should enter the editor feedback loop.

The remainder of the analysis proceeds in two steps. Section~\ref{subsec:population-policy}
studies the idealized population regime, where the  distribution of \((Z,A)\) is known, and derives the
form of an optimal solution to \EDO{}. Section~\ref{sec:online} replaces unavailable population quantities with their observed, sequential counterparts, yielding a practical distribution-free algorithm with finite-horizon guarantees.

\subsection{Optimal Policy in the Population Regime}\label{subsec:population-policy}
\label{sec:population}
We study \EDO{} in the population regime, where the joint distribution of \((Z,A)\) is assumed known. Although unavailable in practice, this regime provides a useful benchmark for understanding the structure of the optimal policy and how posterior edit probability and editor cost should jointly determine selection. This characterization guides the method in Section~\ref{sec:online}, where these quantities are unknown and the policy must learn from partial feedback revealed only for selected interactions.

\begin{theorem}[Optimal population policy]
\label{thm:optimal-population-policy}
Let \(\pi^\star\) be an optimal policy for
\eqref{eq:edo}. Then there exist scalars \(\alpha^\star,\beta^\star\ge0\) such that
\begin{equation}
\label{eq:optimal-policy}
    \pi^\star(z)
    =
    \1\left\{
        \eta(z)> \alpha^\star+\beta^\star c(z)
    \right\},
\end{equation}
with possible randomization on the boundary
\(\{z:\eta(z)=\alpha^\star+\beta^\star c(z)\}\), if needed.
\end{theorem}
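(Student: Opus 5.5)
The plan is to remove the ratio structure of \eqref{eq:edo} with a Dinkelbach-type linearization. Once that is done, the problem becomes a linear program over \(\pi:\cZ\to[0,1]\) with finitely many linear constraints. A Lagrangian (generalized Neyman--Pearson) argument then forces a threshold form. Write \(m(\pi):=\E[\pi(Z)]\). Using \(\Prob{S_\pi=1,A=1}=\E[\pi(Z)\eta(Z)]\), the two errors are
\[
\ErrII(\pi)=\frac{\E[(1-\pi(Z))\eta(Z)]}{1-m(\pi)},
\qquad
\ErrI(\pi)=\frac{\E[\pi(Z)(1-\eta(Z))]}{m(\pi)} .
\]
Multiplying through by \(m(\pi)\), the Type-I constraint becomes the linear inequality \(\E[\pi(Z)(\eta(Z)-(1-\varepsilon))]\ge 0\). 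The \(0/0=0\) convention covers \(m=0\). The budget constraint is already linear.

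Let \(t^\star:=\ErrII(\pi^\star)\ge 0\) be the optimal value. The standard fractional-programming step shows that \(\pi^\star\) also maximizes \(\E[\pi(Z)(\eta(Z)-t^\star)]\) over the feasible set. Indeed, any feasible \(\pi\) satisfies \(\E[(1-\pi)\eta]-t^\star(1-m(\pi))\ge 0\), with equality at \(\pi^\star\), and the left-hand side equals \(\E[\eta]-t^\star-\E[\pi(\eta-t^\star)]\). The degenerate case \(m(\pi^\star)=1\) is handled separately. In that case \(\pi^\star\equiv1\) a.s., which is of the stated form with \(\alpha^\star=\beta^\star=0\) provided \(\eta>0\) a.s.; otherwise one uses boundary randomization on the null set.

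So \(\pi^\star\) solves the linear program
\[
\max_{\pi}\ \E[\pi(\eta-t^\star)]
\quad\text{s.t.}\quad
\E[\pi(\eta-(1-\varepsilon))]\ge0,\qquad \E[\pi c]\le B .
\]
I would introduce multipliers \(\lambda,\beta\ge0\) and form the Lagrangian
\[
\E\Bigl[\pi(Z)\bigl((1+\lambda)\eta(Z)-t^\star-\lambda(1-\varepsilon)-\beta c(Z)\bigr)\Bigr].
\]
For fixed multipliers this is maximized pointwise by selecting exactly where the bracket is positive. Dividing by \(1+\lambda>0\) gives the threshold \(\eta(z)>\alpha^\star+\beta^\star c(z)\) with
\[
\alpha^\star=\frac{t^\star+\lambda(1-\varepsilon)}{1+\lambda}\ge 0,
\qquad
\beta^\star=\frac{\beta}{1+\lambda}\ge0 .
\]
On the zero set of the bracket \(\pi\) is unconstrained, which is the boundary randomization allowed in the statement. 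Complementary slackness then identifies \(\pi^\star\) with this maximizer a.e. off the boundary, since any deviation on a positive-measure set where the bracket is nonzero strictly lowers the Lagrangian value.

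The main obstacle is justifying the existence of multipliers, i.e.\ strong duality for this infinite-dimensional LP. I would avoid abstract duality as follows. Consider the image set
\[
\mathcal{K}:=\Bigl\{\bigl(\E[\pi(\eta-t^\star)],\ \E[\pi(\eta-(1-\varepsilon))],\ \E[\pi c]\bigr):\ \pi:\cZ\to[0,1]\ \text{measurable}\Bigr\}\subset\mathbb{R}^3 .
\]
It is convex because the map is linear in \(\pi\) and randomization is allowed. Optimality of \(\pi^\star\) means that \(\mathcal{K}\) is disjoint from the convex set of points strictly better in the objective and satisfying the constraints. A finite-dimensional separating hyperplane then yields nonnegative weights \((\mu_0,\lambda,\beta)\). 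The remaining delicate points are:
\begin{itemize}
\item Showing \(\mu_0>0\) (normalized to \(1\)). I would try a Slater-type argument first: for instance \(\pi\equiv0\) is always budget-feasible. If that fails, I would treat the degenerate cases \(B=0\) or an infeasible strict Type-I constraint directly.
\item Handling integrability, which I would secure by truncating \(c\) if \(\E[c]=\infty\).
\end{itemize}
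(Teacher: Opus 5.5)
Your proposal is correct and follows the paper's proof essentially step for step. It uses the same Dinkelbach linearization with $t^\star=\ErrII(\pi^\star)$, the same degenerate cases ($m(\pi^\star)=1$, $B=0$, $\Prob{\eta(Z)>1-\varepsilon}=0$), the same Lagrangian, and the identical formulas $\alpha^\star=(t^\star+\lambda(1-\varepsilon))/(1+\lambda)$ and $\beta^\star=\beta/(1+\lambda)$.

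The only variation is how the multipliers are obtained. You use a finite-dimensional separating hyperplane on $\mathcal{K}\subset\mathbb{R}^3$. The paper instead builds the Slater point $\bar\pi=\delta\,\1\{\eta>1-\varepsilon,\ c\le M\}$ and appeals to convex duality in $L^\infty(P_Z)$. Your version is slightly more self-contained, and it goes through: when $B>0$, the choice $\pi\equiv0$ together with any finite-cost policy that strictly satisfies the Type-I constraint already rules out $\mu_0=0$.
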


Theorem~\ref{thm:optimal-population-policy} reduces a constrained set-selection problem to a
one-dimensional ranking rule. An interaction is selected  when its probability of producing
an edit exceeds a global threshold plus a cost-dependent adjustment. The scalar \(\alpha^\star\)
controls how informative the selected set must be, with larger values filtering out interactions with
relatively low edit probability. The term \(\beta^\star c(z)\) accounts for scarce editor capacity by requiring a
higher edit probability for more expensive interactions.

This characterization also explains why learning an edit score is central. The online controller need not reconstruct the full joint distribution of \((Z,A)\), it only needs a score that orders interactions like \(\eta\) and adaptive analogues of the population thresholds. The cost term reflects scarce editor capacity, as it is active only when the budget is restrictive and vanishes otherwise.

This directly motivates the online algorithm. In practice, \(\eta\), \(\alpha^\star\), and \(\beta^\star\) are unknown, so we replace \(\eta\) with an adaptive score \(\widetilde\eta_t\), update the global threshold from observed feedback, and choose the cost multiplier each round to enforce the budget. The online policy thus preserves the structure of the population-optimal rule using quantities observable during deployment.

\section{Online Edit Discovery}
\label{sec:online}

The population analysis identifies the form of the optimal \EDO{} policy, but its ingredients are
unavailable in deployment: the interaction distribution is unknown, \(\eta\) cannot be evaluated,
and edit labels are observed only for selected responses. In this section, we address the same edit
discovery task under this information constraint, without imposing any distributional assumptions on the
incoming  stream. We replace the population errors in
Section~\ref{subsec:problem-formulation} by their online counterparts and replace the
unknown population threshold rule by an adaptive finite-sample policy. Thus, the online setting
specifies how to operationalize \EDO{} from a stream of partially revealed feedback. 

We now describe the online interaction protocol used throughout this section.

\begin{insightbox}{Online interaction protocol}
At each round \(t=1,\ldots,T\), the following sequence occurs:
\begin{enumerate}[leftmargin=1.6em,itemsep=0.25em,topsep=0.3em]
    \item \textbf{Batch arrival.} A batch
    \(\cD_t=\{\Zti\}_{i=1}^{\nt}\), with \(\nt:=|\cD_t|\), arrives together with predictable
    editor costs \(\{\costti\}_{i=1}^{\nt}\). The edit labels
    \(\Ati:=\1\{Y_{t,i}\neq\widetilde Y_{t,i}\}\) are not yet observed.
    \item \textbf{Selection.} Based on the information available at round \(t\), the policy chooses
\(\pi_t\in\{0,1\}^{\nt}\), where \(\pi_{t,i}=1\) sends
interaction \(i\) to the editor and \(\pi_{t,i}=0\) skips it.
\item \textbf{Edit feedback.} The editor returns \(Y_{t,i}\), and hence reveals
\(A_{t,i}\), only for interactions with \(\pi_{t,i}=1\). The newly revealed feedback may then be used
to update the policy before the next batch arrives.
\end{enumerate}
\end{insightbox}
We define the following two quantities as the empirical counterparts of the population Type-I and Type-II errors in Equations~\eqref{eq:population-type-I-error} and~\eqref{eq:population-type-II-error}, respectively.
For a realized stream, we define the online Type-I error as the average within-round confirmation rate,
\begin{equation}
\label{eq:empirical-type-I-error}
\hErrI(\pi)
:=
\frac{1}{T}\sumt
\frac{
\sumi
\pi_{t,i}(1-\Ati)
}{
1\lor
\sumi
\pi_{t,i}
}.
\end{equation}

Likewise, we define the online Type-II error as the average within-round edit rate among skipped interactions,
\begin{equation}
\label{eq:empirical-type-II-error}
\hErrII(\pi)
:=
\frac{1}{T}\sumt
\frac{
\sumi
(1-\pi_{t,i})\Ati
}{
1\lor
\sumi
(1-\pi_{t,i})
}.
\end{equation}
Here, we use the notation \(a\lor b:=\max\{a,b\}\). We treat each round as the relevant operational
unit and therefore average the corresponding conditional error rates across rounds rather than pooling
all interactions. This gives equal weight to different selection rounds and prevents high-volume rounds
from dominating the error criterion. \(\hErrI\) records the average confirmation rate among selected
interactions across rounds, whereas \(\hErrII\) records the average edit rate among skipped interactions.
The latter is observable only retrospectively in our fully labeled evaluation streams, while in deployment it
remains counterfactual because skipped labels are not revealed.

For the same Type-I error tolerance \(0\le\varepsilon<1\) as in \EDO{}, we seek low
\(\hErrII\) while controlling \(\hErrI\le\varepsilon\). The population budget
\(\E[\pi(Z)c(Z)]\le B\) becomes a per-round constraint,
\begin{equation}
    \sumi \pi_{t,i}\costti\le B_t,
    \qquad t=1,\ldots,T.
    \label{eq:online-budget-constraint}
\end{equation}
This provides the sequential analogue of \EDO{} under the information available in the online setting.
In particular, \(\hErrII\) depends on the unobserved labels of skipped interactions and is therefore an
evaluation criterion rather than a signal the policy can optimize directly during deployment. By contrast, the Type-I
contribution at round \(t\) depends only on selected labels, and the budget is known before selection;
both can be controlled directly by the online policy.

Theorem~\ref{thm:optimal-population-policy} suggests how to proceed without the population
quantities. We replace \(\eta\) by a parametrized edit score
\(\scoretheta(z)\in[0,1]\), where \(\theta\) denotes the parameters governing the score function,
replace \(\alpha^\star\) by an adaptive threshold \(\alpha_t\) updated from observed confirmations,
and choose a batchwise multiplier \(\beta_t\) to enforce
Equation~\eqref{eq:online-budget-constraint}. We call the resulting method \EDIT{}:
\emph{Edit Discovery via Iterative Thresholding}. When the score is learnable, \EDIT{} also updates
\(\theta\) using the previously revealed labels.

\subsection{\EDIT{} Policy}

Motivated by the population solution, \EDIT{} uses the binary threshold policy
\begin{equation}
\label{eq:online-binary-threshold-rule}
\piEDITti
=
\1\left\{
\scoreti
>
\alpha_t+\beta_t \costti
\right\},
\end{equation}
where \(\alpha_t\ge0\) is the online confirmation-control threshold and \(\beta_t\ge0\) is the
batchwise cost multiplier. Initialize \(\alpha_1=0\), and fix threshold stepsizes
\(\{\gamma_t\}_{t=1}^T\), with \(\gamma_t>0\). Given \(\alpha_t\), define
\begin{equation}
\label{eq:online-alpha-beta-definition}
\begin{aligned}
\beta_{t}
&=
\inf
\left\{
\beta\ge0:
\sum_i
\1\left\{
\scoreti
>
\alpha_t+\beta \costti
\right\}
\costti
\le B_t
\right\}, \\
\alpha_{t+1}
&=
\max\left\{
0,
\alpha_t
+
\gamma_t
\left(
\frac{
\sum_i
\pi_{t,i}^{\EDIT{}}
(1-A_{t,i})
}{
1\lor
\sum_i
\pi_{t,i}^{\EDIT{}}
}
-\varepsilon
\right)
\right\}.
\end{aligned}
\end{equation}
Note that the set inside the infimum is nonempty, since the sum is zero for all sufficiently large \(\beta\).

The batchwise \(\beta_t\) follows the same principle as the population multiplier. Given \(\alpha_t\), it is zero when the unadjusted rule is budget-feasible and otherwise increases only enough to make the selected set satisfy the current budget.

After the selected examples are evaluated and their labels are observed, the score parameter is
updated by a gradient step on the selected-sample cross-entropy loss
\begin{equation}
\label{eq:selected-cross-entropy-loss}
\ell_t(\theta)
=
\frac{
\sumi
\piEDITti
\left[
-
\Ati\log \widetilde\eta_{\theta}(\Zti)
-
(1-\Ati)\log\left(1-\widetilde\eta_{\theta}(\Zti)\right)
\right]
}{
1\lor
\sumi
\piEDITti
}.
\end{equation}
For a score learning rate \(\xi_t>0\), the score update is
\begin{equation}
\label{eq:online-theta-update-gradient}
\begin{aligned}
\theta_{t+1}
=
\theta_t
-
\xi_t
\nabla_{\theta_t} \ell_t(\theta)
=
\theta_t
-
\xi_t
\frac{
\sumi
\piEDITti
\left[
\frac{\widetilde\eta_{\theta_t}(\Zti)-\Ati}
{\widetilde\eta_{\theta_t}(\Zti)\left(1-\widetilde\eta_{\theta_t}(\Zti)\right)}
\right]
\nabla_{\theta_t} \widetilde\eta_{\theta_t}(\Zti)
}{
1\lor
\sumi
\piEDITti
}.
\end{aligned}
\end{equation}

We now show that \EDIT{} controls the average per-round Type-I error up to a vanishing term in \(T\).

\begin{theorem}[Online constraint guarantee]
\label{thm:online-binary-constraint-guarantee}
Suppose that \(\gamma_t=\gamma>0\) for all \(t=1,\dots,T\). The \EDIT{} policy satisfies
\[
\sumi
\piEDITti\costti\le B_t,
\qquad t=1,\dots,T.
\]
Moreover,
\begin{equation}
\label{eq:online-type-I-bound}
\hErrI(\piEDIT)
\le
\varepsilon
+
\frac{1+\gamma}{\gamma T}.
\end{equation}
\end{theorem}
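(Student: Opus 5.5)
The plan has two parts, one for each claim.

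\textbf{Budget feasibility.} This should follow from the definition of \(\beta_t\) in \eqref{eq:online-alpha-beta-definition} by a right-continuity argument. Fix the round \(t\) and the current \(\alpha_t\), and define
\[
g(\beta):=\sumi \1\{\scoreti>\alpha_t+\beta\costti\}\costti .
\]
Because every \(\costti>0\), each indicator is nonincreasing in \(\beta\), so \(g\) is a nonincreasing step function. The strict inequality inside each indicator makes \(g\) right-continuous: if \(\scoreti>\alpha_t+\beta\costti\) at \(\beta=\beta_t\), the same strict inequality holds for all \(\beta\) slightly larger. The feasible set \(\{\beta\ge0: g(\beta)\le B_t\}\) is nonempty, since \(g(\beta)=0\) once \(\beta\) exceeds \(\max_i \scoreti/\costti\). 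It is also an up-set, because \(g\) is monotone. Its infimum \(\beta_t\) is therefore a limit from the right of feasible points, and right-continuity gives \(g(\beta_t)\le B_t\). Finally, \(\piEDITti\) in \eqref{eq:online-binary-threshold-rule} is exactly the indicator evaluated at \(\beta_t\), so \(\sumi\piEDITti\costti = g(\beta_t)\le B_t\).

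\textbf{Type-I bound.} I would use the standard adaptive-conformal telescoping argument on the \(\alpha_t\) recursion. Write
\[
e_t:=\frac{\sumi\piEDITti(1-\Ati)}{1\lor\sumi\piEDITti}\in[0,1],
\]
so that \(\hErrI=\frac1T\sumt e_t\) and \(\alpha_{t+1}=\max\{0,\alpha_t+\gamma(e_t-\varepsilon)\}\). Since \(\max\{0,x\}\ge x\), we get \(\alpha_{t+1}-\alpha_t\ge\gamma(e_t-\varepsilon)\). Summing over \(t\) and using \(\alpha_1=0\) yields
\[
\gamma\sumt(e_t-\varepsilon)\le\alpha_{T+1},
\qquad\text{hence}\qquad
\hErrI\le\varepsilon+\frac{\alpha_{T+1}}{\gamma T}.
\]
It then suffices to show \(\alpha_{T+1}\le 1+\gamma\).

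\textbf{Bounding \(\alpha_{T+1}\).} I would prove \(\alpha_t\le 1+\gamma\) for all \(t\) by induction, splitting into two cases.
\begin{itemize}
\item If \(\alpha_t\ge1\), then no item is selected, since \(\scoreti\le1\le\alpha_t\le\alpha_t+\beta_t\costti\). Thus \(e_t=0\) under the convention \(1\lor0\), and \(\alpha_{t+1}=\max\{0,\alpha_t-\gamma\varepsilon\}\le\alpha_t\le 1+\gamma\).
\item If \(\alpha_t<1\), then \(\alpha_{t+1}\le\alpha_t+\gamma(1-\varepsilon)<1+\gamma\).
\end{itemize}
Combining this with the previous display gives \eqref{eq:online-type-I-bound}. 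The main subtlety is this step: the boundedness of \(\alpha_t\) relies on \(\scoretheta\in[0,1]\) together with the strict inequality in the selection rule, which force empty selections, and hence \(e_t=0\), once the threshold reaches \(1\). The argument needs no distributional assumptions and is unaffected by the \(\theta\) updates, because only \(e_t\in[0,1]\) and \(\scoretheta\le1\) are used.
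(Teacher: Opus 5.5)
Your proposal is correct and follows the paper's proof essentially step for step: the same telescoping of the clipped $\alpha_t$ recursion gives $\hErrI\le\varepsilon+\alpha_{T+1}/(\gamma T)$, and the same induction shows $\alpha_t\le 1+\gamma$ because a threshold at least $1$ forces an empty selection. The only addition over the paper is in the budget claim. The paper simply cites the definition of $\beta_t$, whereas your right-continuity argument for the step function $g$ justifies that the infimum is attained and hence that $g(\beta_t)\le B_t$, which is a welcome bit of extra rigor.
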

This guarantee requires no distributional assumptions on the incoming interaction stream: batches may be dependent, non-identically distributed, or nonstationary, and the bound applies to the realized interaction and observed feedback sequence. It is also agnostic to how the edit score is constructed and therefore applies across a broad range of score designs. These types of guarantees are common in the online conformal prediction literature (e.g. look at \cite{gibbs2021adaptive, ramalingam2025relationship}). We instantiate random, representation-based, and LLM-based scores in Section~\ref{subsec:experiment-settings}. The full \EDIT{} algorithm appears in Appendix~\ref{app:algorithm}.
\section{Experiments}
\label{sec:experiments}

We evaluate \EDIT{} by Type-I error among selected interactions, Type-II error among skipped interactions, and editor cost. Section~\ref{subsec:experiment-settings} describes the scores, models, datasets, baselines, protocol, and metrics, and Section~\ref{subsec:main-results} presents the main results. Appendix~\ref{app:extended-experiments} provides experimental details, additional evaluations, and representation ablations; Appendix~\ref{app:human-audit} validates the benchmark-derived edit labels through a human audit with GPT-5.6 Sol as an independent model annotator.

\subsection{Experimental Settings}
\label{subsec:experiment-settings}

\paragraph{Edit-score design.}
Theorem~\ref{thm:online-binary-constraint-guarantee} allows any adaptive score in $[0,1]$; score quality affects efficiency, not controller validity. We consider three edit-score functions in our experiments:

\begin{itemize}[leftmargin=1.6em,itemsep=0.35em,topsep=0.3em]
    \item \textbf{Random score (\EDITRAND).}
    As an uninformative control, each interaction receives an independent score
    \(
    \widetilde\eta_t(Z_{t,i})\sim\mathrm{Uniform}(0,1)
    \).
    It retains the same \(\alpha_t\) update and budget-feasible rule as \EDIT{} but removes semantic information, isolating the contribution of the controller alone.
    
    \item \textbf{LLM score (\EDITLLM).}
    Motivated by evidence that capable language models can estimate answer correctness or uncertainty \citep{kadavath2022know}, this training-free variant asks an LLM, without the reference answer, to estimate from the prompt--response pair the probability of a human edit and return a score in \([0,1]\) for each candidate before editor invocation.

     \item \textbf{Representation score (\EDITREP).}
    Our primary score design maps the prompt--response pair
    $Z$ to a frozen representation $\phi(Z)\in\mathbb R^d$ and learns a logistic
    linear probe defined as:
    \begin{equation}
        \widetilde\eta_{\theta}(Z)
        =
        \sigma\!\left(\langle\phi(Z),\theta\rangle\right),
        \qquad
        \sigma(u)=\frac{1}{1+\exp(-u)}.
        \label{eq:online-eta-linear-probe}
    \end{equation}
    Here, only \(\theta\) is learned while \(\phi\) remains fixed, and the selected-example gradient in Eq.~\eqref{eq:online-theta-update-gradient} becomes
    \begin{equation}
        \nabla_\theta \ell_t(\theta)
        =
        \frac{
        \sum_{i=1}^{n_t}\pi_{t,i}^{\EDIT{}}
        \bigl(\widetilde\eta_\theta(Z_{t,i})-A_{t,i}\bigr)\phi(Z_{t,i})
        }{
        1\lor\sum_{i=1}^{n_t}\pi_{t,i}^{\EDIT{}}
        }.
        \label{eq:linear-probe-gradient}
    \end{equation}
    Appendix~\ref{app:embedding-size} studies larger representation models. We use all-MiniLM-L6-v2 \citep{wang2020minilm,reimers2019sbert} and update only the linear probe online. At 22.7M parameters, it is over three orders of magnitude smaller than the 70B response model, testing whether \EDITREP{} can rank edits with a lightweight encoder in the online deployment setting.
\end{itemize}

\paragraph{Models and datasets.}
The primary response model is Llama-3.3-70B-Instruct \citep{grattafiori2024llama3}, used with deterministic decoding to generate responses and, when applicable, provide the \EDITLLM{} score or execute \LLMSelect{}. Appendix~\ref{app:qwen-experiments} repeats the experiment with Qwen3-32B \citep{yang2025qwen3}. We use four complementary benchmarks: MATH-500~\citep{hendrycks2021math,lightman2024verify}, a 500-problem MATH subset; MMLU-Pro~\citep{wang2024mmlupro}, a ten-option multi-domain reasoning benchmark; PopQA~\citep{mallen2023popqa}, an entity-centric long-tail factual QA benchmark; and GPQA~\citep{rein2023gpqa}, a graduate-level multiple-choice science benchmark. 

\paragraph{Baselines.}
Our main baseline, \LLMSelect{}, receives the full indexed batch, item costs, and budget and returns a budget-feasible subset requiring correction. It gives a capable LLM joint access to all candidates for direct ranking and allocation, testing this prompted decision against \EDIT{}'s explicit edit score, Type-I feedback through $\alpha_t$, and budget control through $\beta_t$. We also compare against \EDITRAND{} and \EDITLLM{} as training-free score variants.

\paragraph{Online protocol and metrics.} We compare Type-II error at comparable editor cost and Type-I error; lower Type-II error under matched conditions indicates better edit discovery. Unless stated otherwise, runs use \(T=250\) rounds with \(n_t=10\), sampling batches with replacement from a fixed response pool, with every method receiving the same ordered stream. The main setting uses \(c(z)=1\) and \(B_t=5\); Appendix~\ref{app:distribution-shift} considers nonstationary streams. For each model--dataset pair, we choose \(\varepsilon\) so the bound in Equation~\eqref{eq:online-type-I-bound} approximately matches \textsc{LLM-Select}'s online Type-I error. We report running-average Type-I and Type-II errors and editor cost; Appendix~\ref{app:operating-point-tradeoffs} gives complementary metrics and operating points. We audit \(200\) held-out interactions (\(50\) per dataset) with a human and GPT-5.6 Sol under the same confirm-versus-edit criterion; Appendix~\ref{app:human-audit} reports their agreement.

\subsection{Results}
\label{subsec:main-results}

\begin{figure*}[t!]
    \centering
    \includegraphics[width=\textwidth]{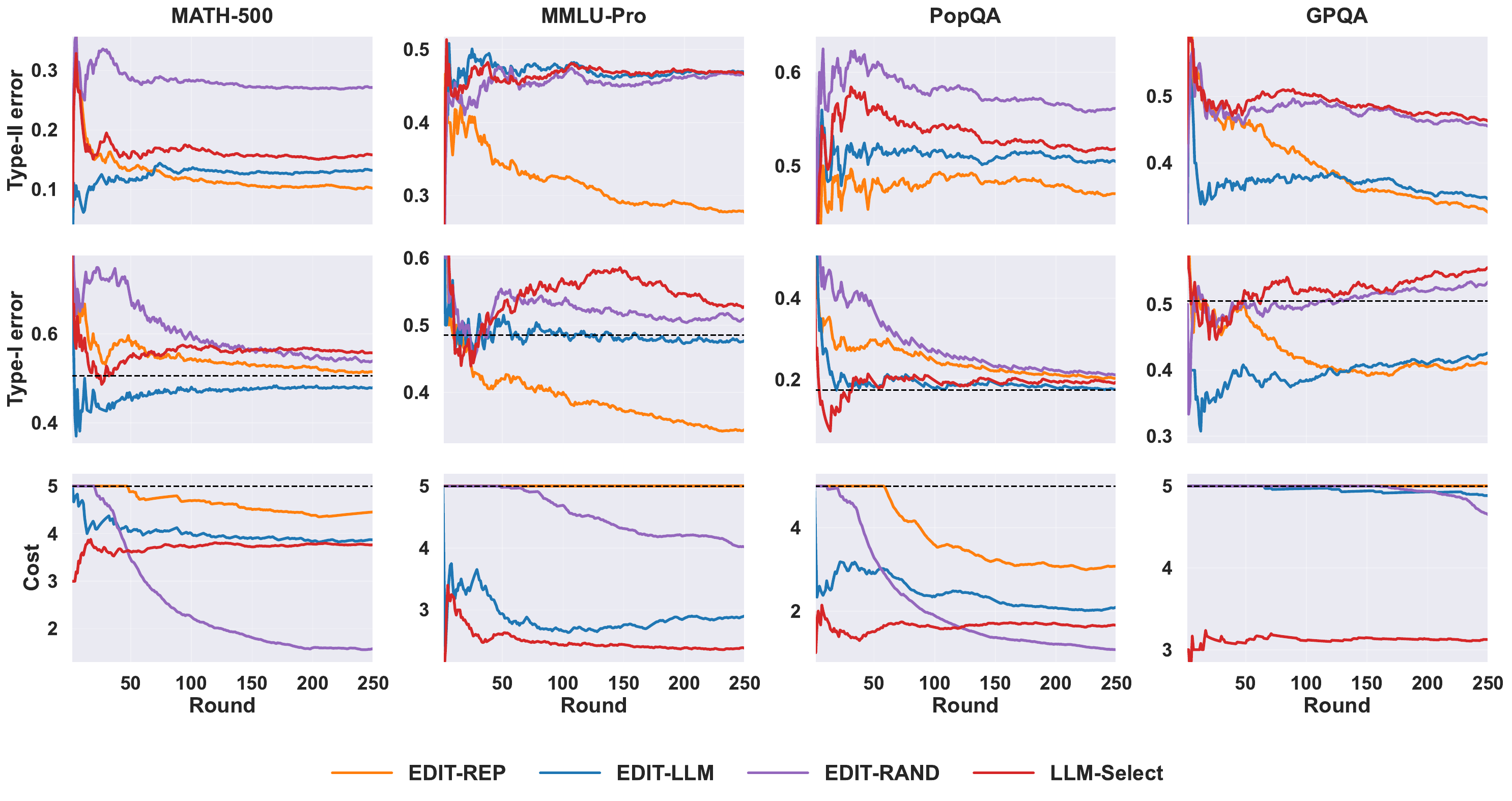}
    \caption{
    Main results with Llama-3.3-70B-Instruct. Columns correspond to MATH-500, MMLU-Pro, PopQA, and GPQA. Rows report running-average Type-II error, Type-I error, and editor cost; dashed horizontal lines mark the configured Type-I error tolerance and per-round editor budget.}
    \label{fig:main-llama-results}
\end{figure*}

Figure~\ref{fig:main-llama-results} shows the online trajectories for Llama-3.3-70B-Instruct. Across all four datasets, \EDITREP{} generally attains lower Type-II error than \LLMSelect{} with comparable or more stable Type-I behavior. This supports separating an edit score, adaptive Type-I control, and budget mechanism rather than direct LLM batch selection. Unlike \EDIT{}, \LLMSelect{} cannot explicitly adapt to the observed confirmation rate, making its Type-I behavior less consistent across datasets.

Score quality also strongly affects discovery. \EDITREP{} attains the lowest final Type-II error on all four datasets, with especially clear gains on MMLU-Pro and GPQA; \EDITLLM{} also improves over \EDITRAND{} on several datasets. Since all variants share the selection rule, the differences reflect score information: the controller sets selectivity and the score prioritizes interactions within budget.

The main \EDITREP{} score uses a 22.7M-parameter MiniLM encoder, tiny relative to the 70B response model. Appendix~\ref{app:embedding-size} shows Qwen3-0.6B, 1.7B, and 4B representations progressively improving edit discovery, with 4B attaining the lowest final Type-II error on all four datasets, suggesting stronger encoders can further improve performance in this setting.

The Type-I trajectories show \(\alpha_t\) adapting to the observed confirmation rate toward the configured tolerance. Cost must be interpreted jointly with both errors: lower cost is not preferable if a conservative policy misses more editable responses. We therefore compare Type-II error under the prescribed Type-I tolerance and budget, where \EDITREP{} is consistently strongest.

\section{Conclusion and Limitations}
\label{sec:conclusion}

We introduced edit-discovery optimization and \EDIT{}, a cost-aware online method that combines edit scoring with adaptive Type-I control and per-round budget enforcement. The population analysis shows that optimal selection follows a cost-adjusted threshold on edit probability, motivating the decomposition used by \EDIT{}. In the online setting, the method controls the confirmation rate among selected interactions without distributional assumptions while satisfying the editor budget by construction. Across four reasoning and factual question-answering benchmarks, \EDITREP{} reduces missed edits relative to direct LLM-based batch selection while maintaining reliable Type-I behavior, showing the value of separating edit scoring, online control, and budget allocation. The results further indicate that score quality matters: even a lightweight representation model provides useful rankings, while stronger encoders can further improve edit discovery. This modular design allows stronger edit predictors to be incorporated without changing the online control rule.

Our experiments use benchmark-derived edit labels, supported by a limited 200-example human audit, rather than large-scale human editing. We also evaluate edit discovery itself rather than the downstream value of incorporating the resulting corrections into model training or adaptation. Future work can study richer human editing workflows, heterogeneous editing costs, and whether discovered edits translate into measurable downstream model improvements in realistic human--AI deployment settings with diverse users, domains, editor expertise, and resource constraints.

\section*{Acknowledgments}
SK, GP, and HH acknowledge support from EnCORE, the Institute for Emerging CORE Methods in Data Science. SK also acknowledges support from an AWS gift to Penn Engineering’s ASSET Center for Trustworthy AI.

\bibliographystyle{plainnat}
\bibliography{references}

@inproceedings{hendrycks2021math,
  title     = {Measuring Mathematical Problem Solving With the {MATH} Dataset},
  author    = {Hendrycks, Dan and Burns, Collin and Kadavath, Saurav and Arora, Akul and Basart, Steven and Tang, Eric and Song, Dawn and Steinhardt, Jacob},
  booktitle = {Advances in Neural Information Processing Systems: Datasets and Benchmarks Track},
  year      = {2021}
}

@article{ramalingam2025relationship,
  title={The relationship between no-regret learning and online conformal prediction},
  author={Ramalingam, Ramya and Kiyani, Shayan and Roth, Aaron},
  journal={arXiv preprint arXiv:2502.10947},
  year={2025}
}

@article{gibbs2021adaptive,
  title={Adaptive conformal inference under distribution shift},
  author={Gibbs, Isaac and Candes, Emmanuel},
  journal={Advances in Neural Information Processing Systems},
  volume={34},
  pages={1660--1672},
  year={2021}
}

@article{noorani2026multi,
  title={Multi-round human-ai collaboration with user-specified requirements},
  author={Noorani, Sima and Kiyani, Shayan and Hassani, Hamed and Pappas, George},
  journal={arXiv preprint arXiv:2602.17646},
  year={2026}
}

@inproceedings{lightman2024verify,
  title     = {Let's Verify Step by Step},
  author    = {Lightman, Hunter and Kosaraju, Vineet and Burda, Yuri and Edwards, Harrison and Baker, Bowen and Lee, Teddy and Leike, Jan and Schulman, John and Sutskever, Ilya and Cobbe, Karl},
  booktitle = {International Conference on Learning Representations},
  year      = {2024}
}

@inproceedings{wang2024mmlupro,
  title     = {{MMLU-Pro}: A More Robust and Challenging Multi-Task Language Understanding Benchmark},
  author    = {Wang, Yubo and Ma, Xueguang and Zhang, Ge and Ni, Yuansheng and Chandra, Abhranil and Guo, Shiguang and Ren, Weiming and Arulraj, Aaran and He, Xuan and Jiang, Ziyan and Li, Tianle and Ku, Max and Wang, Kai and Zhuang, Alex and Fan, Rongqi and Yue, Xiang and Chen, Wenhu},
  booktitle = {Advances in Neural Information Processing Systems},
  volume    = {37},
  year      = {2024},
  doi       = {10.52202/079017-3018}
}

@inproceedings{mallen2023popqa,
  title     = {When Not to Trust Language Models: Investigating Effectiveness of Parametric and Non-Parametric Memories},
  author    = {Mallen, Alex and Asai, Akari and Zhong, Victor and Das, Rajarshi and Khashabi, Daniel and Hajishirzi, Hannaneh},
  booktitle = {Proceedings of the 61st Annual Meeting of the Association for Computational Linguistics},
  year      = {2023}
}

@inproceedings{rein2023gpqa,
  title     = {{GPQA}: A Graduate-Level Google-Proof Q\&A Benchmark},
  author    = {Rein, David and Hou, Betty Li and Stickland, Asa Cooper and Petty, Jackson and Pang, Richard Yuanzhe and Dirani, Julien and Michael, Julian and Bowman, Samuel R.},
  booktitle = {First Conference on Language Modeling},
  year      = {2024},
  url       = {https://openreview.net/forum?id=Ti67584b98}
}

@article{grattafiori2024llama3,
  title   = {The {Llama 3} Herd of Models},
  author  = {Grattafiori, Aaron and Dubey, Abhimanyu and Jauhri, Abhinav and Pandey, Abhinav and Kadian, Abhishek and Al-Dahle, Ahmad and others},
  journal = {arXiv preprint arXiv:2407.21783},
  year    = {2024}
}

@article{yang2025qwen3,
  title   = {{Qwen3} Technical Report},
  author  = {Yang, An and Li, Anfeng and Yang, Baosong and Zhang, Beichen and Hui, Binyuan and Zheng, Bo and Yu, Bowen and Gao, Chang and Huang, Chengen and Lv, Chenxu and others},
  journal = {arXiv preprint arXiv:2505.09388},
  year    = {2025}
}

@inproceedings{wang2020minilm,
  title     = {{MiniLM}: Deep Self-Attention Distillation for Task-Agnostic Compression of Pre-Trained Transformers},
  author    = {Wang, Wenhui and Wei, Furu and Dong, Li and Bao, Hangbo and Yang, Nan and Zhou, Ming},
  booktitle = {Advances in Neural Information Processing Systems},
  year      = {2020}
}

@inproceedings{reimers2019sbert,
  title     = {Sentence-{BERT}: Sentence Embeddings using Siamese {BERT}-Networks},
  author    = {Reimers, Nils and Gurevych, Iryna},
  booktitle = {Proceedings of the 2019 Conference on Empirical Methods in Natural Language Processing and the 9th International Joint Conference on Natural Language Processing},
  year      = {2019}
}

@techreport{settles2009active,
  title       = {Active Learning Literature Survey},
  author      = {Settles, Burr},
  institution = {University of Wisconsin--Madison},
  number      = {1648},
  year        = {2009}
}

@inproceedings{zhang2022active,
  title     = {A Survey of Active Learning for Natural Language Processing},
  author    = {Zhang, Zhisong and Strubell, Emma and Hovy, Eduard},
  booktitle = {Proceedings of the 2022 Conference on Empirical Methods in Natural Language Processing},
  year      = {2022}
}

@article{kadavath2022know,
  title   = {Language Models (Mostly) Know What They Know},
  author  = {Kadavath, Saurav and Conerly, Tom and Askell, Amanda and Henighan, Tom and Drain, Dawn and Perez, Ethan and Schiefer, Nicholas and Hatfield-Dodds, Zac and DasSarma, Nova and Tran-Johnson, Eli and others},
  journal = {arXiv preprint arXiv:2207.05221},
  year    = {2022}
}

@inproceedings{zheng2023judge,
  title     = {Judging {LLM}-as-a-Judge with {MT-Bench} and Chatbot Arena},
  author    = {Zheng, Lianmin and Chiang, Wei-Lin and Sheng, Ying and Zhuang, Siyuan and Wu, Zhanghao and Zhuang, Yonghao and Lin, Zi and Li, Zhuohan and Li, Dacheng and Xing, Eric P. and Zhang, Hao and Gonzalez, Joseph E. and Stoica, Ion},
  booktitle = {Advances in Neural Information Processing Systems},
  year      = {2023}
}

@article{angelopoulos2023ppi,
  title   = {Prediction-Powered Inference},
  author  = {Angelopoulos, Anastasios N. and Bates, Stephen and Fannjiang, Clara and Jordan, Michael I. and Zrnic, Tijana},
  journal = {Science},
  volume  = {382},
  number  = {6671},
  pages   = {669--674},
  year    = {2023},
  doi     = {10.1126/science.adi6000}
}

@inproceedings{fisch2024stratppi,
  title     = {Stratified Prediction-Powered Inference for Hybrid Language Model Evaluation},
  author    = {Fisch, Adam and Maynez, Joshua and Hofer, R. Alex and Dhingra, Bhuwan and Globerson, Amir and Cohen, William W.},
  booktitle = {Advances in Neural Information Processing Systems},
  volume    = {37},
  year      = {2024}
}

@inproceedings{zrnic2024active,
  title     = {Active Statistical Inference},
  author    = {Zrnic, Tijana and Cand\`es, Emmanuel},
  booktitle = {Proceedings of the 41st International Conference on Machine Learning},
  series    = {Proceedings of Machine Learning Research},
  volume    = {235},
  pages     = {62993--63010},
  year      = {2024}
}

@article{angelopoulos2025costoptimal,
  title   = {Cost-Optimal Active {AI} Model Evaluation},
  author  = {Angelopoulos, Anastasios N. and Eisenstein, Jacob and Berant, Jonathan and Agarwal, Alekh and Fisch, Adam},
  journal = {arXiv preprint arXiv:2506.07949},
  year    = {2025}
}

@inproceedings{candes2026pac,
  title     = {Probably Approximately Correct Labels},
  author    = {Cand\`es, Emmanuel J. and Ilyas, Andrew and Zrnic, Tijana},
  booktitle = {Proceedings of the 43rd International Conference on Machine Learning},
  year      = {2026}
}

@article{jin2023selection,
  title   = {Selection by Prediction with Conformal p-values},
  author  = {Jin, Ying and Cand\`es, Emmanuel J.},
  journal = {Journal of Machine Learning Research},
  volume  = {24},
  number  = {244},
  pages   = {1--41},
  year    = {2023}
}

@inproceedings{gui2024conformalalignment,
  title     = {Conformal Alignment: Knowing When to Trust Foundation Models with Guarantees},
  author    = {Gui, Yu and Jin, Ying and Ren, Zhimei},
  booktitle = {Advances in Neural Information Processing Systems},
  volume    = {37},
  year      = {2024}
}

@article{gui2025acs,
  title   = {{ACS}: An Interactive Framework for Conformal Selection},
  author  = {Gui, Yu and Jin, Ying and Nair, Yash and Ren, Zhimei},
  journal = {arXiv preprint arXiv:2507.15825},
  year    = {2025}
}

@article{liu2025ocsarc,
  title     = {Online Conformal Selection with Accept-to-Reject Changes},
  author    = {Liu, Kangdao and Xi, Huajun and Vong, Chi Man and Wei, Hongxin},
  journal   = {Proceedings of the AAAI Conference on Artificial Intelligence},
  volume    = {40},
  number    = {28},
  pages     = {23765--23773},
  year      = {2026},
  doi       = {10.1609/aaai.v40i28.39551}
}

@article{bai2026score,
  title   = {Conformal Selective Prediction with General Risk Control},
  author  = {Bai, Tian and Jin, Ying},
  journal = {arXiv preprint arXiv:2603.24704},
  year    = {2026}
}

@inproceedings{geifman2019selectivenet,
  title     = {{SelectiveNet}: A Deep Neural Network with an Integrated Reject Option},
  author    = {Geifman, Yonatan and El-Yaniv, Ran},
  booktitle = {Proceedings of the 36th International Conference on Machine Learning},
  series    = {Proceedings of Machine Learning Research},
  volume    = {97},
  pages     = {2151--2159},
  year      = {2019}
}

@inproceedings{madras2018defer,
  title     = {Predict Responsibly: Improving Fairness and Accuracy by Learning to Defer},
  author    = {Madras, David and Pitassi, Toniann and Zemel, Richard},
  booktitle = {Advances in Neural Information Processing Systems},
  volume    = {31},
  year      = {2018}
}

@inproceedings{mozannar2020defer,
  title     = {Consistent Estimators for Learning to Defer to an Expert},
  author    = {Mozannar, Hussein and Sontag, David},
  booktitle = {Proceedings of the 37th International Conference on Machine Learning},
  series    = {Proceedings of Machine Learning Research},
  volume    = {119},
  pages     = {7076--7087},
  year      = {2020}
}

@inproceedings{angelopoulos2024risk,
  title     = {Conformal Risk Control},
  author    = {Angelopoulos, Anastasios N. and Bates, Stephen and Fisch, Adam and Lei, Lihua and Schuster, Tal},
  booktitle = {International Conference on Learning Representations},
  year      = {2024},
  url       = {https://openreview.net/forum?id=33XGfHLtZg}
}

@article{liang2023helm,
  title   = {Holistic Evaluation of Language Models},
  author  = {Liang, Percy and Bommasani, Rishi and Lee, Tony and Tsipras, Dimitris and Soylu, Dilara and Yasunaga, Michihiro and Zhang, Yian and Narayanan, Deepak and Wu, Yuhuai and Kumar, Ananya and others},
  journal = {Transactions on Machine Learning Research},
  year    = {2023}
}

@misc{huang2026conformallabeling,
      title={Model-agnostic Selective Labeling with Provable Statistical Guarantees}, 
      author={Huipeng Huang and Wenbo Liao and Huajun Xi and Hao Zeng and Mengchen Zhao and Hongxin Wei},
      year={2026},
      eprint={2510.14581},
      archivePrefix={arXiv},
      primaryClass={cs.LG},
      url={https://arxiv.org/abs/2510.14581}, 
}

@article{noorani2025collaborative,
  title         = {Human-{AI} Collaborative Uncertainty Quantification},
  author        = {Noorani, Sima and Kiyani, Shayan and Pappas, George and Hassani, Hamed},
  journal       = {arXiv preprint arXiv:2510.23476},
  year          = {2025},
  eprint        = {2510.23476},
  archivePrefix = {arXiv},
  primaryClass  = {cs.AI},
  doi           = {10.48550/arXiv.2510.23476},
  url           = {https://arxiv.org/abs/2510.23476}
}

@article{kiyani2026support,
  title         = {Strategic Decision Support for {AI} Agents},
  author        = {Kiyani, Shayan and Noorani, Sima and Pappas, George and Hassani, Hamed},
  journal       = {arXiv preprint arXiv:2606.12587},
  year          = {2026},
  eprint        = {2606.12587},
  archivePrefix = {arXiv},
  primaryClass  = {cs.AI},
  doi           = {10.48550/arXiv.2606.12587},
  url           = {https://arxiv.org/abs/2606.12587}
}

@article{kiyani2026verification,
  title         = {When to Trust the Cheap Check: Weak and Strong Verification for Reasoning},
  author        = {Kiyani, Shayan and Noorani, Sima and Pappas, George and Hassani, Hamed},
  journal       = {arXiv preprint arXiv:2602.17633},
  year          = {2026},
  eprint        = {2602.17633},
  archivePrefix = {arXiv},
  primaryClass  = {cs.LG},
  doi           = {10.48550/arXiv.2602.17633},
  url           = {https://arxiv.org/abs/2602.17633}
}

@inproceedings{straitouri2024counterfactual,
  title     = {Designing Decision Support Systems using Counterfactual Prediction Sets},
  author    = {Straitouri, Eleni and Gomez Rodriguez, Manuel},
  booktitle = {Proceedings of the 41st International Conference on Machine Learning},
  series    = {Proceedings of Machine Learning Research},
  volume    = {235},
  pages     = {46722--46744},
  year      = {2024},
  publisher = {PMLR},
  url       = {https://proceedings.mlr.press/v235/straitouri24a.html},
  eprint    = {2306.03928},
  archivePrefix = {arXiv}
}

@inproceedings{straitouri2023expert,
  title     = {Improving Expert Predictions with Conformal Prediction},
  author    = {Straitouri, Eleni and Wang, Lequn and Okati, Nastaran and Gomez Rodriguez, Manuel},
  booktitle = {Proceedings of the 40th International Conference on Machine Learning},
  series    = {Proceedings of Machine Learning Research},
  volume    = {202},
  pages     = {32633--32653},
  year      = {2023},
  publisher = {PMLR},
  url       = {https://proceedings.mlr.press/v202/straitouri23a.html}
}

\newpage
\appendix
\startcontents[appendix]

\clearpage
\section*{Appendix Contents}
\printcontents[appendix]{}{1}{\setcounter{tocdepth}{2}}

\newpage

\section{Extended Related Work}\label{sec:related-work}

\paragraph{Efficient AI evaluation.}
A growing line of work studies how model predictions can reduce the cost of evaluation. Prediction-powered inference combines a small labeled sample with predictions on a much larger unlabeled sample to estimate population quantities with valid uncertainty \citep{angelopoulos2023ppi}. This idea has been specialized to hybrid language-model evaluation through stratified sampling \citep{fisch2024stratppi}, while cost-optimal active evaluation allocates queries between weak and strong raters to obtain an efficient estimate of mean strong-rater performance \citep{angelopoulos2025costoptimal}. Active statistical inference similarly chooses which labels to acquire in order to improve the precision of confidence intervals or tests \citep{zrnic2024active}. These works are closely related to our motivation because they explicitly account for expensive human evaluation. Their objective, however, is accurate statistical inference about an aggregate quantity. In contrast, our goal is to identify the individual interactions that should be edited, while controlling confirmations among selected interactions and respecting the available editor budget. Standard LLM evaluation frameworks and LLM-based judges also focus on measuring model quality \citep{liang2023helm,zheng2023judge}, rather than deciding which interactions should enter an online human feedback loop.

\paragraph{AI-assisted data collection.}
Classical active learning selects examples whose labels are expected to improve a predictive model \citep{settles2009active,zhang2022active}. More recent work considers hybrid annotation pipelines in which an AI system supplies some labels and an expert supplies the remainder. PAC labeling uses model uncertainty to construct a completed dataset with bounded aggregate labeling error \citep{candes2026pac}, whereas Conformal Labeling controls the false discovery rate among labels assigned by the AI system \citep{huang2026conformallabeling}. Although these methods also reduce human annotation effort, their output is a fully labeled dataset in which skipped expert labels are replaced by model labels. In our setting, skipped responses remain unlabeled, and the purpose of selection is to discover corrections rather than to approximate all missing labels. This distinction is important in practice because our policy must learn from only the edit decisions revealed for the interactions it previously selected.

\paragraph{Conformal selection and online calibration.}
Conformal selection wraps a prediction model with calibrated $p$-values and selects units whose unobserved outcomes exceed a target while controlling false discoveries \citep{jin2023selection}. Conformal Alignment applies this perspective to foundation-model outputs and certifies a selected set as trustworthy \citep{gui2024conformalalignment}. Related extensions support interactive data analysis, irreversible online decisions, and general bounded risks \citep{gui2025acs,liu2025ocsarc,bai2026score}. These methods select outputs to accept or trust, whereas our positive decision sends an output to a human because it is likely to require correction. Reversing an acceptance rule does not preserve its guarantee or yield control of confirmations among selected items. Moreover, these methods generally require labeled calibration data or exchangeability-type conditions. Our threshold update is technically related to adaptive conformal inference and conformal risk control \citep{gibbs2021adaptive,angelopoulos2024risk}, but our guarantee applies to an arbitrary stream with labels revealed only after editor feedback and is coupled to a per-round cost constraint.

\paragraph{Selective prediction and learning to defer.}
Selective prediction allows a model to abstain on inputs for which its prediction is unreliable, including end-to-end methods that optimize the risk--coverage trade-off \citep{geifman2019selectivenet}. Learning to defer extends this formulation by explicitly accounting for the accuracy, bias, or cost of a downstream human expert \citep{madras2018defer,mozannar2020defer}. In these formulations, deferral is an alternative action used to improve the accuracy or utility of the deployed prediction system. Our setting differs in two ways. First, the model response has already been produced, and the human is asked to edit or confirm it in order to create corrective data. Second, a selection policy can fail both by selecting responses that are ultimately confirmed and by skipping responses that would have been edited. Standard deferral methods do not address this edit-discovery objective or the partial feedback induced by selective editor invocation.

\paragraph{Human--AI collaborative procedures.}
A complementary line of work designs procedures that combine human and AI judgments or selectively acquire costly assistance. \citep{straitouri2023expert} construct conformal prediction sets from which experts choose a label, balancing the risk of excluding the correct answer against the difficulty of choosing among many candidates. \citep{straitouri2024counterfactual} optimize this assistance through online learning, exploiting nested prediction sets and a counterfactual monotonicity assumption without requiring a parametric expert model. \citep{noorani2025collaborative} instead refine human-proposed prediction sets through AI-guided pruning and augmentation, with explicit counterfactual-harm and complementarity criteria and offline and online calibration algorithms. For reasoning systems, \citep{kiyani2026verification} study policies that accept, reject, or request strong verification using a weak score, and develop randomized online procedures that control incorrect acceptance and rejection. \citep{kiyani2026support} minimize support-call frequency subject to controlling missed beneficial support, using adaptive thresholding and randomized exploration under partial feedback. These works provide algorithmic foundations for human--AI collaboration and selective assistance. Our formulation focuses on discovering editor modifications under constraints on confirmations and interaction-specific review costs.

\section{Proofs}
\label{app:proofs}

\subsection{Proof of Theorem \ref{thm:optimal-population-policy}}
\label{app:proof-optimal-population-policy}
\begin{proof}
First, we rewrite the two conditional probabilities in terms of \(\eta\). For any policy \(\pi\), by
the law of total expectation,
\[
\Prob{A=1,S_\pi=0}
=
\E[\Prob{A=1,S_\pi=0\mid Z}]
=
\E[(1-\pi(Z))\eta(Z)].
\]
Also,
\(
    \Prob{S_\pi=0}=\E[1-\pi(Z)]
\).
Therefore,
\[
\Prob{A=1\mid S_\pi=0}
=
\frac{\Prob{A=1,S_\pi=0}}{\Prob{S_\pi=0}}
=
\frac{\E[(1-\pi(Z))\eta(Z)]}{\E[1-\pi(Z)]}.
\]
Similarly,
\[
\Prob{A=0,S_\pi=1}
=
\E[\Prob{A=0,S_\pi=1\mid Z}]
=
\E[\pi(Z)(1-\eta(Z))],
\]
and
\(
    \Prob{S_\pi=1}=\E[\pi(Z)]
\).
Hence,
\[
\Prob{A=0\mid S_\pi=1}
=
\frac{\Prob{A=0,S_\pi=1}}{\Prob{S_\pi=1}}
=
\frac{\E[\pi(Z)(1-\eta(Z))]}{\E[\pi(Z)]}.
\]
Thus the constraint \(\Prob{A=0\mid S_\pi=1}\le \varepsilon\) is equivalent to
\[
    \E[\pi(Z)(1-\eta(Z)-\varepsilon)]\le 0.
\]
Indeed, if \(\E[\pi(Z)]>0\), this follows by multiplying both sides by \(\E[\pi(Z)]\). If
\(\E[\pi(Z)]=0\), both sides reduce to \(0\le0\) under the population convention \(0/0=0\).

If \(\Prob{S_{\pi^\star}=0}=0\), then \(\pi^\star(z)=1\) almost surely. In this case the claimed
threshold form follows by taking \(\alpha^\star=0\) and \(\beta^\star=0\), with the boundary chosen
so that the policy selects almost surely.

It remains to consider the case \(\Prob{S_{\pi^\star}=0}>0\). Let \(\pi^\star\) be an optimal policy.
By optimality, for every feasible policy \(\pi\) with \(\E[1-\pi(Z)]>0\),
\[
\Prob{A=1\mid S_\pi=0}
\ge
\Prob{A=1\mid S_{\pi^\star}=0}.
\]
Equivalently,
\[
\frac{\E[(1-\pi(Z))\eta(Z)]}{\E[1-\pi(Z)]}
\ge
\Prob{A=1\mid S_{\pi^\star}=0}.
\]
Since \(\E[1-\pi(Z)]>0\), this is equivalent to
\[
\E\left[
(1-\pi(Z))
\left(
\eta(Z)-\Prob{A=1\mid S_{\pi^\star}=0}
\right)
\right]\ge 0.
\]
For \(\pi=\pi^\star\), equality holds:
\[
\E\left[
(1-\pi^\star(Z))
\left(
\eta(Z)-\Prob{A=1\mid S_{\pi^\star}=0}
\right)
\right]=0.
\]
Therefore, \(\pi^\star\) solves the auxiliary linear problem
\begin{equation}
\label{eq:auxiliary-linear-problem}
\begin{aligned}
\min_{\pi:\cZ\to[0,1]}
\quad &
\E\left[
(1-\pi(Z))
\left(
\eta(Z)-\Prob{A=1\mid S_{\pi^\star}=0}
\right)
\right]
\\
\text{s.t.}\quad
&
\E[\pi(Z)(1-\eta(Z)-\varepsilon)]\le 0,
\\
&
\E[\pi(Z)c(Z)]\le B .
\end{aligned}
\end{equation}
The policy space in \eqref{eq:edo} already allows \(0\le \pi\le1\), so no relaxation is introduced here.
Since the objective and constraints in \eqref{eq:auxiliary-linear-problem} are linear in \(\pi\), the
pointwise optimizer is binary except possibly on equality boundaries.

We next verify Slater's condition, after handling two degenerate cases directly.
First, if \(B=0\), then \(c(Z)>0\) implies that every feasible policy satisfies
\(\pi(Z)=0\) almost surely. The claimed threshold form therefore follows by taking
\(\alpha^\star=1\) and \(\beta^\star=0\), with the boundary chosen to remain unselected.

Next, suppose
\(
    \Prob{\eta(Z)>1-\varepsilon}=0
\).
Then \(1-\eta(Z)-\varepsilon\ge0\) almost surely, and feasibility implies
\[
    \pi^\star(Z)(1-\eta(Z)-\varepsilon)=0
    \qquad\text{almost surely}.
\]
Hence \(\pi^\star\) can select only on the set
\(\{\eta(Z)=1-\varepsilon\}\). The claimed threshold form follows by taking
\(\alpha^\star=1-\varepsilon\) and \(\beta^\star=0\), with the boundary chosen to reproduce
\(\pi^\star\).

We may therefore assume that \(B>0\) and
\(\Prob{\eta(Z)>1-\varepsilon}>0\). Since \(c(Z)<\infty\), there exists some
\(M<\infty\) such that the measurable set
\[
    H_M
    :=
    \{z:\eta(z)>1-\varepsilon,\ c(z)\le M\}
\]
has positive probability. Let
\[
    \bar\pi(z)=\delta\,\1\{z\in H_M\},
\]
where \(\delta\in(0,1]\) is chosen sufficiently small that
\(
    \delta M\Prob{Z\in H_M}<B
\).
Then
\[
\E[\bar\pi(Z)(1-\eta(Z)-\varepsilon)]<0
\qquad \text{and} \qquad
\E[\bar\pi(Z)c(Z)]
\le
\delta M\Prob{Z\in H_M}
<
B.
\]
Thus the problem in \eqref{eq:auxiliary-linear-problem} satisfies Slater's condition. Viewing
\(\pi\) as an element of \(L^\infty(P_Z)\), its objective and constraints are continuous linear
functionals, and standard convex duality therefore gives strong duality; the KKT conditions are
necessary and sufficient for optimality.

For multipliers \(\lambda_1,\lambda_2\ge0\), define the Lagrangian
\[
\begin{aligned}
\mathcal L(\pi,\lambda_1,\lambda_2)
&=
\E\left[
(1-\pi(Z))
\left(
\eta(Z)-\Prob{A=1\mid S_{\pi^\star}=0}
\right)
\right]
\\
&\qquad
+
\lambda_1\E[\pi(Z)(1-\eta(Z)-\varepsilon)]
+
\lambda_2\bigl(\E[\pi(Z)c(Z)]-B\bigr)
\\
&=
\E\left[
\eta(Z)-\Prob{A=1\mid S_{\pi^\star}=0}
\right]
-\lambda_2 B
\\
&\qquad
+
\E\Bigl[
\pi(Z)
\bigl(
\Prob{A=1\mid S_{\pi^\star}=0}
-\eta(Z)
+
\lambda_1(1-\eta(Z)-\varepsilon)
+
\lambda_2 c(Z)
\bigr)
\Bigr].
\end{aligned}
\]
By the KKT conditions, there exist optimal dual variables
\(\lambda_1^\star,\lambda_2^\star\ge0\) such that:
\begin{itemize}
    \item
    \(
    \E[\pi^\star(Z)(1-\eta(Z)-\varepsilon)]\le 0,
    \qquad
    \E[\pi^\star(Z)c(Z)]-B\le 0.
    \)

    \item
    \(
    \lambda_1^\star,\lambda_2^\star\ge 0.
    \)

    \item
    \(
    \lambda_1^\star
    \E[\pi^\star(Z)(1-\eta(Z)-\varepsilon)]
    =0,
    \qquad
    \lambda_2^\star
    \bigl(\E[\pi^\star(Z)c(Z)]-B\bigr)=0.
    \)

    \item
    \(
    \pi^\star\in
    \argmin_{0\le \pi\le 1}
    \mathcal L(\pi,\lambda_1^\star,\lambda_2^\star).
    \)
\end{itemize}

Now fix \(\lambda_1=\lambda_1^\star\) and \(\lambda_2=\lambda_2^\star\). Since the Lagrangian separates pointwise,
minimizing \(\mathcal L(\pi,\lambda_1^\star,\lambda_2^\star)\) over \(0\le \pi\le1\) is the same as solving,
for almost every \(z\),
\[
\min_{0\le t\le 1}
t
\bigl(
\Prob{A=1\mid S_{\pi^\star}=0}
-\eta(z)
+
\lambda_1^\star(1-\eta(z)-\varepsilon)
+
\lambda_2^\star c(z)
\bigr).
\]
Thus the minimizer is
\[
t^\star(z)=
\begin{cases}
1,
& \text{if }
\Prob{A=1\mid S_{\pi^\star}=0}
-\eta(z)
+
\lambda_1^\star(1-\eta(z)-\varepsilon)
+
\lambda_2^\star c(z)<0,\\[4pt]
0,
& \text{if }
\Prob{A=1\mid S_{\pi^\star}=0}
-\eta(z)
+
\lambda_1^\star(1-\eta(z)-\varepsilon)
+
\lambda_2^\star c(z)>0,\\[4pt]
\text{any value in }[0,1],
& \text{if }
\Prob{A=1\mid S_{\pi^\star}=0}
-\eta(z)
+
\lambda_1^\star(1-\eta(z)-\varepsilon)
+
\lambda_2^\star c(z)=0.
\end{cases}
\]
The last case is precisely the threshold boundary and can be resolved by randomization if needed.

Finally, since \(\lambda_1^\star,\lambda_2^\star\ge0\),
\[
\begin{aligned}
&
\Prob{A=1\mid S_{\pi^\star}=0}
-\eta(z)
+
\lambda_1^\star(1-\eta(z)-\varepsilon)
+
\lambda_2^\star c(z)<0
\\
&\iff
(1+\lambda_1^\star)\eta(z)
>
\Prob{A=1\mid S_{\pi^\star}=0}
+\lambda_1^\star(1-\varepsilon)+\lambda_2^\star c(z)
\\
&\iff
\eta(z)
>
\frac{\Prob{A=1\mid S_{\pi^\star}=0}+\lambda_1^\star(1-\varepsilon)}
{1+\lambda_1^\star}
+
\frac{\lambda_2^\star}{1+\lambda_1^\star}c(z).
\end{aligned}
\]
Define
\[
\alpha^\star
:=
\frac{\Prob{A=1\mid S_{\pi^\star}=0}+\lambda_1^\star(1-\varepsilon)}
{1+\lambda_1^\star},
\qquad
\beta^\star
:=
\frac{\lambda_2^\star}{1+\lambda_1^\star}.
\]
Since \(0\le \varepsilon<1\), \(\Prob{A=1\mid S_{\pi^\star}=0}\ge0\), and
\(\lambda_1^\star,\lambda_2^\star\ge0\), we have \(\alpha^\star,\beta^\star\ge0\). Therefore,
\[
\pi^\star(z)
=
\1\left\{
\eta(z)
>
\alpha^\star+\beta^\star c(z)
\right\},
\]
with possible randomization on the boundary
\(\eta(z)=\alpha^\star+\beta^\star c(z)\). This proves the claimed form of the optimal policy.
\end{proof}

\subsection{Proof of Theorem \ref{thm:online-binary-constraint-guarantee}}
\label{app:proof-online-binary-constraint-guarantee}
\begin{proof}
The budget constraint holds by the definition of \(\beta_t\). Indeed, from
\eqref{eq:online-alpha-beta-definition} and the \EDIT{} threshold rule
\eqref{eq:online-binary-threshold-rule},
\[
\sumi
\piEDITti\costti
=
\sumi
\1\left\{
\scoreti
>
\alpha_t+\beta_t\costti
\right\}
\costti
\le B_t.
\]

It remains to prove the average per-round Type-I constraint guarantee. Since \(\gamma_t=\gamma\), the update
rule gives
\[
\alpha_{t+1}
=
\max\left\{
0,
\alpha_t+
\gamma
\left(
\frac{
\sumi
\piEDITti(1-\Ati)
}{
1\lor
\sumi
\piEDITti
}
-\varepsilon
\right)
\right\}.
\]
Therefore,
\[
\alpha_{t+1}
\ge
\alpha_t+
\gamma
\left(
\frac{
\sumi
\piEDITti(1-\Ati)
}{
1\lor
\sumi
\piEDITti
}
-\varepsilon
\right) \implies
\frac{
\sumi
\piEDITti(1-\Ati)
}{
1\lor
\sumi
\piEDITti
}
-\varepsilon
\le
\frac{\alpha_{t+1}-\alpha_t}{\gamma}.
\]
Summing over \(t=1,\dots,T\), and using \(\alpha_1=0\), gives
\[
\begin{aligned}
\sumt
\left(
\frac{
\sumi
\piEDITti(1-\Ati)
}{
1\lor
\sumi
\piEDITti
}
-\varepsilon
\right)
&\le
\sumt
\frac{\alpha_{t+1}-\alpha_t}{\gamma}
=
\frac{\alpha_{T+1}-\alpha_1}{\gamma}
=
\frac{\alpha_{T+1}}{\gamma}.
\end{aligned}
\]

We now show that \(\alpha_{T+1}\le1+\gamma\). To do so, we prove the stronger claim that
\(
\alpha_t\le1+\gamma
\)
for all \(t=1,\dots,T+1\).
The proof is by induction on \(t\). Since \(\alpha_1=0\), the claim holds initially. If
\(\alpha_t\le1\), then the selected confirmation ratio is at most one, and therefore
\[
\alpha_{t+1}
\le
\alpha_t+\gamma
\le
1+\gamma.
\]
If \(\alpha_t>1\), then \(\scoreti\in[0,1]\), \(\beta_t\ge0\), and \(\costti>0\) imply that no item
can satisfy
\(
    \scoreti>\alpha_t+\beta_t\costti
\).
Hence \(\piEDITti=0\) for all \(i\). Therefore,
\[
\frac{
\sumi
\piEDITti(1-\Ati)
}{
1\lor
\sumi
\piEDITti
}
=0
\implies
\alpha_{t+1}
=
\max\left\{
0,
\alpha_t-\gamma\varepsilon
\right\}
\le
\alpha_t.
\]
Thus \(\alpha_t\le1+\gamma\) for all \(t\), and in particular
\(
    \alpha_{T+1}\le1+\gamma
\).
Consequently,
\[
\hErrI(\piEDIT)-\varepsilon
\le
\frac{1+\gamma}{\gamma T}.
\]
This proves the theorem.
\end{proof}

\section{\EDIT{} Pseudocode}\label{app:algorithm}
\begingroup
\renewcommand{\baselinestretch}{1.3}\selectfont

\begin{algorithm}[H]
\caption{\EDIT: \underline{E}dit \underline{D}iscovery via \underline{I}terative \underline{T}hresholding}
\label{alg:edit}
\KwRequire{Batches \(\{\cD_t\}_{t=1}^T\), budgets \(\{B_t\}_{t=1}^T\), Type-I error tolerance \(\varepsilon\), threshold stepsizes \(\{\gamma_t\}_{t=1}^T\), score \(\widetilde\eta_{\theta}\), score stepsizes \(\{\xi_t\}_{t=1}^T\), initial parameter \(\theta_1\)}
\(\alpha_1\gets0\)\;
\For{\(t=1,\dots,T\)}{
    compute \(\widetilde\eta_{\theta_t}(\Zti)\), \(\quad i=1,\dots,\nt\)\;
    \(\displaystyle
    \beta_t\gets
    \inf\left\{
    \beta\ge0:
    \sumi
    \costti
    \1\{\widetilde\eta_{\theta_t}(\Zti)>\alpha_t+\beta\costti\}
    \le B_t
    \right\}\)\;
    \(\piEDITti\gets
    \1\{\widetilde\eta_{\theta_t}(\Zti)>\alpha_t+\beta_t\costti\}\),
    \(\quad i=1,\dots,\nt\)\;
    observe \(\Ati\) for all \(i\) with \(\piEDITti=1\)\;
    \(\displaystyle
    \alpha_{t+1}\gets
    \max\left\{
    0,
    \alpha_t+
    \gamma_t
    \left(
    \frac{\sumi \piEDITti(1-\Ati)}
         {1\lor\sumi \piEDITti}
    -
    \varepsilon
    \right)
    \right\}\)\;
    \(\displaystyle
    \theta_{t+1}\gets
    \theta_t
    -
    \xi_t
    \frac{
    \sumi
    \piEDITti
    \left[
    \frac{\widetilde\eta_{\theta_t}(\Zti)-\Ati}
    {\widetilde\eta_{\theta_t}(\Zti)\left(1-\widetilde\eta_{\theta_t}(\Zti)\right)}
    \right]
    \nabla_{\theta_t} \widetilde\eta_{\theta_t}(\Zti)
    }{
    1\lor
    \sumi
    \piEDITti
    }\)\;
}
\end{algorithm}
\endgroup

\section{Additional Experiments}
\label{app:extended-experiments}

\subsection{Additional Tables and Figures}
\label{app:additional-tables}

Table~\ref{tab:epsilon-values} reports the Type-I error tolerances used for each
model--dataset pair, and Table~\ref{tab:model-accuracies} reports the corresponding
base-model accuracies on the benchmark interactions. Table~\ref{tab:common-hyperparameters}
summarizes the hyperparameters shared across the main online experiments.

\begin{table*}[t]
\centering

\begin{minipage}[t]{0.48\textwidth}
\vspace{0pt}
\centering
\scriptsize

\begin{minipage}[t][4.8\baselineskip][t]{\linewidth}
\captionof{table}{
Type-I error tolerance \(\varepsilon\) for each model--dataset pair.
The same Type-I error tolerance is used across all \EDIT{} score variants.
}
\label{tab:epsilon-values}
\end{minipage}

\renewcommand{\arraystretch}{1.18}
\setlength{\tabcolsep}{2pt}

\begin{tabularx}{\linewidth}{
    >{\hsize=0.82\hsize\raggedright\arraybackslash}X
    >{\hsize=1.20\hsize\centering\arraybackslash}X
    >{\hsize=0.98\hsize\centering\arraybackslash}X
}
\toprule
\textbf{Dataset}
&
\mbox{\textbf{Llama-3.3-70B}}
&
\mbox{\textbf{Qwen3-32B}}
\\
\midrule
\rowcolor{mathsoft}
\mbox{MATH-500} & 0.505 & 0.615 \\

\rowcolor{mmlusoft}
\mbox{MMLU-Pro} & 0.485 & 0.425 \\

\rowcolor{popsoft}
\mbox{PopQA} & 0.175 & 0.000 \\

\rowcolor{gpqasoft}
\mbox{GPQA} & 0.505 & 0.375 \\
\bottomrule
\end{tabularx}

\end{minipage}%
\hfill%
\begin{minipage}[t]{0.48\textwidth}
\vspace{0pt}
\centering
\scriptsize

\begin{minipage}[t][4.8\baselineskip][t]{\linewidth}
\captionof{table}{
Accuracy of the response models on the benchmark interactions.
}
\label{tab:model-accuracies}
\end{minipage}

\renewcommand{\arraystretch}{1.18}
\setlength{\tabcolsep}{2pt}

\begin{tabularx}{\linewidth}{
    >{\hsize=0.82\hsize\raggedright\arraybackslash}X
    >{\hsize=1.20\hsize\centering\arraybackslash}X
    >{\hsize=0.98\hsize\centering\arraybackslash}X
}
\toprule
\textbf{Dataset}
&
\mbox{\textbf{Llama-3.3-70B}}
&
\mbox{\textbf{Qwen3-32B}}
\\
\midrule
\rowcolor{mathsoft}
\mbox{MATH-500} & 0.730 & 0.794 \\

\rowcolor{mmlusoft}
\mbox{MMLU-Pro} & 0.526 & 0.558 \\

\rowcolor{popsoft}
\mbox{PopQA} & 0.444 & 0.220 \\

\rowcolor{gpqasoft}
\mbox{GPQA} & 0.540 & 0.458 \\
\bottomrule
\end{tabularx}

\end{minipage}

\end{table*}

\begin{table*}[t]
\centering
\caption{
Hyperparameters shared across the main online experiments.
}
\label{tab:common-hyperparameters}

\renewcommand{\arraystretch}{1.25}
\setlength{\tabcolsep}{7pt}

\begin{tabular*}{\textwidth}{
    @{\extracolsep{\fill}}
    ccccccccc
    @{}
}
\toprule
\(\boldsymbol{T}\)
&
\(\boldsymbol{n_t}\)
&
\(\boldsymbol{B_t}\)
&
\(\boldsymbol{c(z)}\)
&
\(\boldsymbol{\gamma}\)
&
\(\boldsymbol{\xi}\)
&
\(\boldsymbol{\alpha_1}\)
&
\(\boldsymbol{\theta_0}\)
&
\(\boldsymbol{d}\)
\\
\midrule
250
&
10
&
5
&
1
&
0.1
&
1.0
&
0
&
\(0.1 \times \boldsymbol{1}\)
&
384
\\
\bottomrule
\end{tabular*}

\end{table*}

\subsection{Human Audit of Benchmark-Derived Edit Labels}
\label{app:human-audit}

The benchmark experiments use task correctness as a scalable proxy for whether a response requires
modification, so we directly validate this assumption on \(200\) held-out prompt--response pairs:
\(50\) examples each from MATH-500, MMLU-Pro, PopQA, and GPQA. A human annotator evaluates every
example with the interface in Figure~\ref{fig:human-audit-ui} and assigns either \emph{confirm as-is}
or \emph{needs edit}; the interface exposes the user prompt, candidate response, and reference answer
and optionally records a reason category. GPT-5.6 Sol independently annotates the same \(200\)
examples using the same binary criterion and the common evaluation prompt reported in
Appendix~\ref{app:gpt-audit-prompt}. We treat the human labels as the primary validation target and
use GPT-5.6 Sol only as an additional consistency check.

\begin{figure*}[t]
    \centering
    \includegraphics[width=0.8\textwidth]{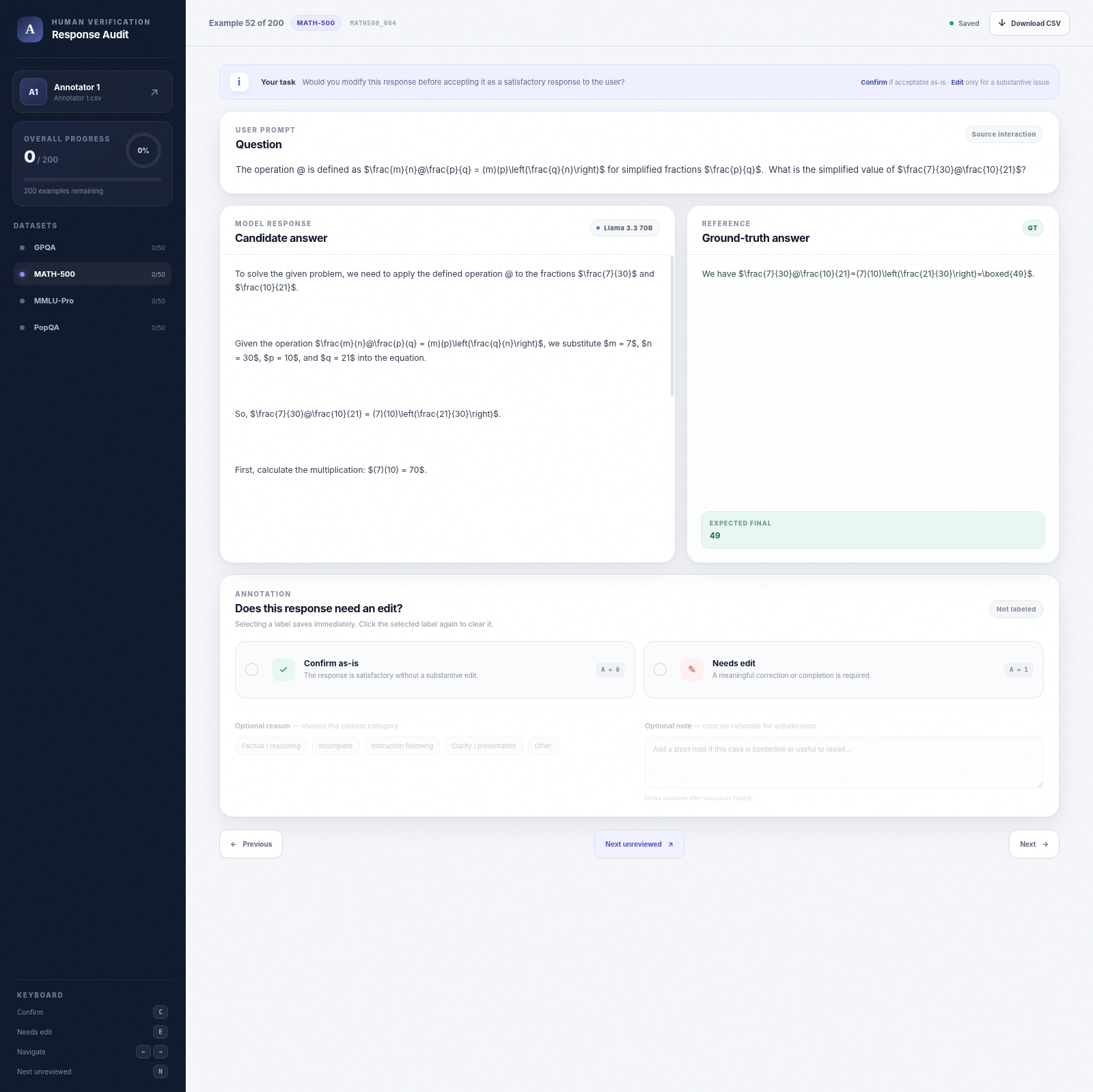}
    \caption{
    Human-audit interface. The human annotator inspects the user prompt, candidate response, and
    reference answer, then decides whether the response can be confirmed as-is or requires a
    substantive edit. Optional reason categories support later inspection of disagreement cases.
    }
    \label{fig:human-audit-ui}
\end{figure*}

The benchmark-derived labels closely track the human judgments at both the aggregate and
example levels. Across all \(200\) examples, benchmark and human labels agree on \(96.5\%\) of
interactions, with Cohen's \(\kappa=0.930\) and edit-class \(F_1=0.963\); their overall edit
rates are also nearly identical, \(0.470\) for the benchmark proxy versus \(0.465\) for the human
annotator. Dataset-level benchmark--human agreement remains between \(0.960\) and \(0.980\), with
\(\kappa>0.916\) for every dataset. GPT-5.6 Sol produces an edit rate of \(0.460\), agrees with the
benchmark in \(0.980\) of cases (\(\kappa=0.960\)), and agrees with the human annotator in \(0.975\)
of cases (\(\kappa=0.950\)). Table~\ref{tab:human-audit} gives the complete dataset-level breakdown.

\begin{table*}[t]
\centering
\caption{
Validation of the benchmark-derived edit labels. The human annotator is the primary validation target;
GPT-5.6 Sol is an independent model annotator. ``B--H agree'', \(\kappa_{BH}\), and \(F_{1,BH}\)
compare the benchmark with the human labels. ``H--GPT agree'' reports agreement between the human
annotator and GPT-5.6 Sol. The positive class is ``needs edit''.
}
\label{tab:human-audit}
\renewcommand{\arraystretch}{1.14}
\setlength{\tabcolsep}{5pt}
\scriptsize
\resizebox{\textwidth}{!}{%
\begin{tabular}{lcccccccc}
\toprule
\textbf{Dataset} &
\textbf{\(N\)} &
\textbf{Benchmark edit} &
\textbf{Human edit} &
\textbf{GPT-5.6 Sol edit} &
\textbf{B--H agree} &
\textbf{\(\kappa_{BH}\)} &
\textbf{\(F_{1,BH}\)} &
\textbf{H--GPT agree}
\\
\midrule
\cellcolor{mathsoft}MATH-500 & \cellcolor{mathsoft}50 & \cellcolor{mathsoft}0.280 & \cellcolor{mathsoft}0.300 & \cellcolor{mathsoft}0.260 & \cellcolor{mathsoft}0.980 & \cellcolor{mathsoft}0.951 & \cellcolor{mathsoft}0.966 & \cellcolor{mathsoft}0.960 \\
\cellcolor{mmlusoft}MMLU-Pro & \cellcolor{mmlusoft}50 & \cellcolor{mmlusoft}0.460 & \cellcolor{mmlusoft}0.420 & \cellcolor{mmlusoft}0.440 & \cellcolor{mmlusoft}0.960 & \cellcolor{mmlusoft}0.919 & \cellcolor{mmlusoft}0.955 & \cellcolor{mmlusoft}0.980 \\
\cellcolor{popsoft}PopQA & \cellcolor{popsoft}50 & \cellcolor{popsoft}0.560 & \cellcolor{popsoft}0.520 & \cellcolor{popsoft}0.540 & \cellcolor{popsoft}0.960 & \cellcolor{popsoft}0.920 & \cellcolor{popsoft}0.963 & \cellcolor{popsoft}0.980 \\
\cellcolor{gpqasoft}GPQA & \cellcolor{gpqasoft}50 & \cellcolor{gpqasoft}0.580 & \cellcolor{gpqasoft}0.620 & \cellcolor{gpqasoft}0.600 & \cellcolor{gpqasoft}0.960 & \cellcolor{gpqasoft}0.917 & \cellcolor{gpqasoft}0.967 & \cellcolor{gpqasoft}0.980 \\
\midrule
\textbf{Overall} &
\textbf{200} &
\textbf{0.470} &
\textbf{0.465} &
\textbf{0.460} &
\textbf{0.965} &
\textbf{0.930} &
\textbf{0.963} &
\textbf{0.975}
\\
\bottomrule
\end{tabular}%
}
\end{table*}

Taken together, the audit provides direct support for using benchmark correctness as the edit
indicator in the main experiments rather than merely matching edit prevalence by coincidence. The
benchmark proxy disagrees with the human annotator on only \(7\) of \(200\) audited interactions,
while its per-dataset edit rates stay close to the human rates and its example-level agreement is
consistently high. The independent GPT-5.6 Sol annotations lead to the same qualitative conclusion,
which further suggests that the benchmark labels capture a stable notion of substantive edit need.

\subsection{Operating-Point Trade-offs and Complementary  Metrics. }
\label{app:operating-point-tradeoffs}

The main experiments use one operating point for each model--dataset pair. To examine whether the
observed behavior is specific to that choice, we additionally sweep the Type-I error tolerance
\(\varepsilon\in\{0.1,0.2,0.3,0.4,0.5,0.6\}\) and the per-round editor budget
\(B\in\{2,3,4,5,7\}\) for \EDITREP{}. For each configuration, we repeat the online experiment over
five realized streams and report the mean together with \(95\%\) confidence intervals. The
\LLMSelect{} marker corresponds to its main \(B=5\) run and is included as a reference
operating point.

\begin{figure*}[t]
    \centering
    \includegraphics[width=\textwidth]{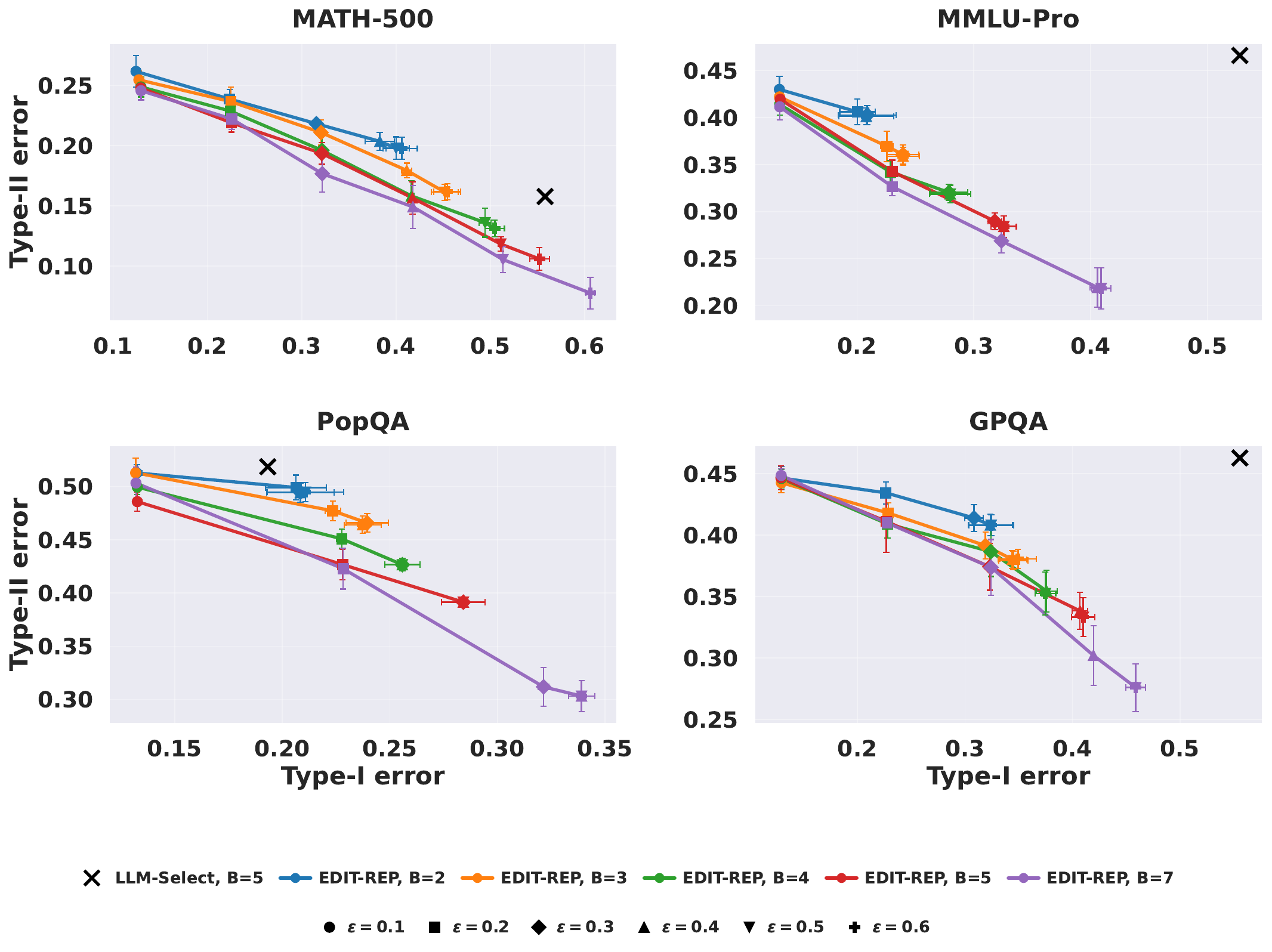}
    \caption{
    Type-I--Type-II  trade-offs for \EDITREP{}.
    Panels correspond to MATH-500, MMLU-Pro, PopQA, and GPQA. Colored curves vary the
    editor budget \(B\), while marker shapes denote the Type-I error tolerance \(\varepsilon\).
    Error bars show \(95\%\) confidence intervals across five realized streams.
    The black cross shows the existing \LLMSelect{} run with \(B=5\).
    Lower values on both axes are better.
    }
    \label{fig:pareto-type-i-type-ii}
\end{figure*}

Figure~\ref{fig:pareto-type-i-type-ii} makes the operating-point trade-off explicit. As the policy is
allowed to select more broadly, it can reduce the edit rate among skipped interactions, at the cost
of a larger confirmation rate among selected interactions. Across the four datasets, the resulting
curves show that the relative advantage of informative \EDITREP{} ranking is not tied to a single
choice of \(\varepsilon\) or \(B\). Figure~\ref{fig:pareto-cost-recall} presents the same sweep in terms of the amount of editor effort
actually used and the fraction of editable interactions discovered. Increasing the available editor budget
generally increases edit recall, while the different values of \(\varepsilon\) determine how
aggressively that budget is used.

\begin{figure*}[t]
    \centering
    \includegraphics[width=\textwidth]{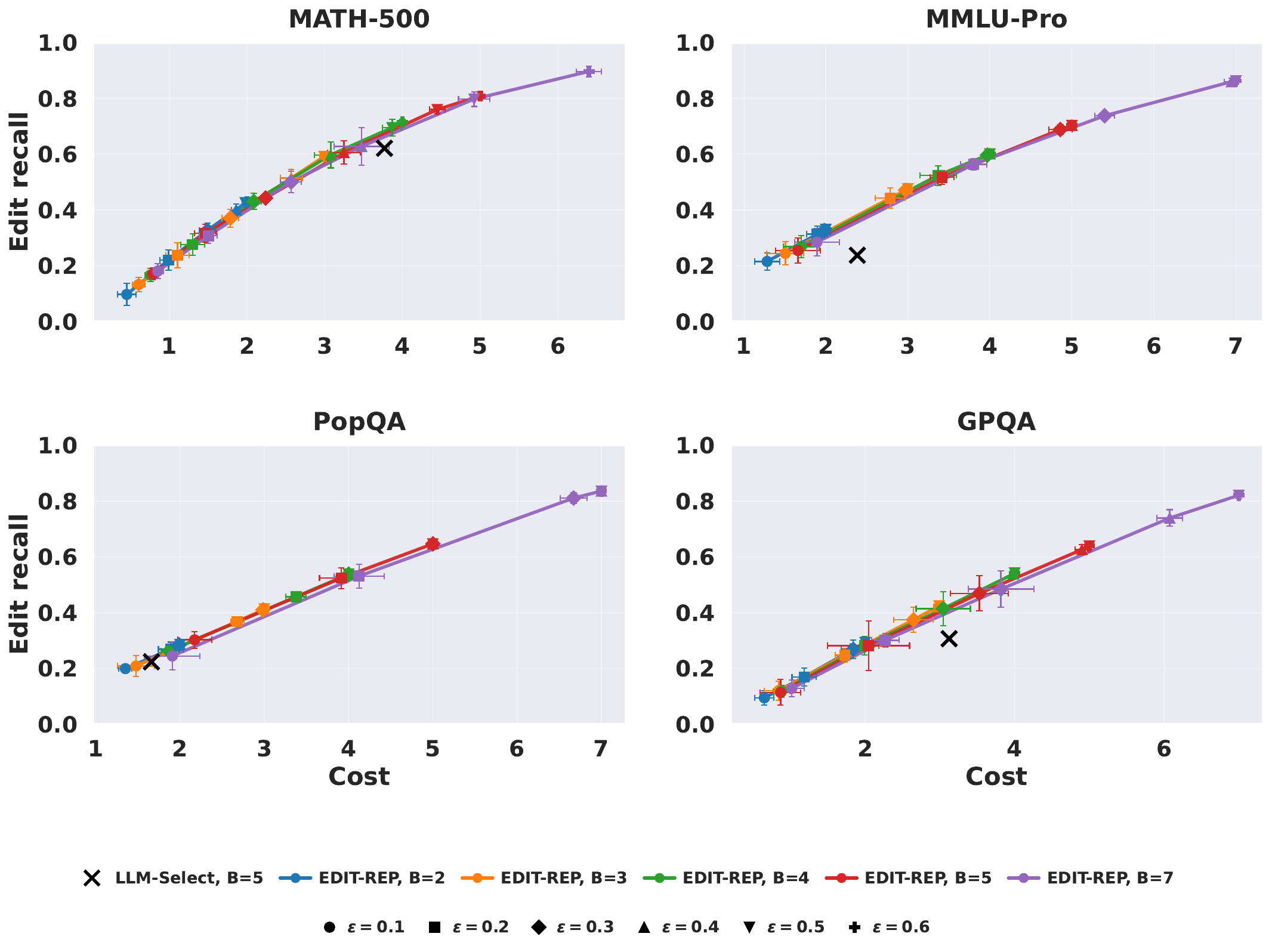}
    \caption{
    Editor-cost--edit-recall trade-offs for \EDITREP{}.
    Panels correspond to MATH-500, MMLU-Pro, PopQA, and GPQA. Colored curves vary the
    editor budget \(B\), while marker shapes denote the Type-I error tolerance \(\varepsilon\).
    Error bars show \(95\%\) confidence intervals across five realized streams.
    The black cross shows the existing \LLMSelect{} run with \(B=5\).
    Higher edit recall is better.
    }
    \label{fig:pareto-cost-recall}
\end{figure*}

To complement the conditional Type-I and Type-II errors used in the main text, Table~\ref{tab:new-discovery-metrics}
reports three discovery-centric quantities on the existing main Llama-3.3-70B-Instruct and Qwen3-32B runs:
edit recall, the absolute number of missed edits in the realized stream, and edit precision among selected
interactions. These metrics provide a complementary view of how much corrective data each policy recovers
at its realized editor operating point. Across both response models, \EDITREP{} attains the highest edit recall
and fewest missed edits on most dataset--model pairs, while \EDITLLM{} achieves the strongest edit precision
in several settings.

\begin{table*}[b]
\centering

\begin{minipage}[t]{0.315\textwidth}
\vspace{0pt}
\centering
\scriptsize

\begin{minipage}[t][4.8\baselineskip][t]{\linewidth}
\captionof{table}{
Type-I error tolerance used in the response-length-aware experiments.
}
\label{tab:length-cost-epsilon-values}
\end{minipage}

\renewcommand{\arraystretch}{1.16}
\setlength{\tabcolsep}{4pt}

\begin{tabularx}{\linewidth}{
    >{\raggedright\arraybackslash}X
    >{\centering\arraybackslash}X
}
\toprule
\textbf{Dataset} & \(\boldsymbol{\varepsilon}\) \\
\midrule
\rowcolor{mathsoft}
\mbox{MATH-500} & 0.535 \\

\rowcolor{gpqasoft}
\mbox{GPQA} & 0.485 \\
\bottomrule
\end{tabularx}

\end{minipage}%
\hfill%
\begin{minipage}[t]{0.315\textwidth}
\vspace{0pt}
\centering
\scriptsize

\begin{minipage}[t][4.8\baselineskip][t]{\linewidth}
\captionof{table}{
Response-model accuracy on the distribution-shift streams.
}
\label{tab:shift-model-accuracy}
\end{minipage}

\renewcommand{\arraystretch}{1.16}
\setlength{\tabcolsep}{4pt}

\begin{tabularx}{\linewidth}{
    >{\raggedright\arraybackslash}X
    >{\centering\arraybackslash}X
}
\toprule
\textbf{Dataset} & \textbf{Accuracy} \\
\midrule
\rowcolor{mmlusoft}
\mbox{MMLU-Pro} & 0.540 \\

\rowcolor{popsoft}
\mbox{PopQA} & 0.393 \\
\bottomrule
\end{tabularx}

\end{minipage}%
\hfill%
\begin{minipage}[t]{0.315\textwidth}
\vspace{0pt}
\centering
\scriptsize

\begin{minipage}[t][4.8\baselineskip][t]{\linewidth}
\captionof{table}{
Type-I error tolerance used in the distribution-shift experiments.
}
\label{tab:shift-epsilon-values}
\end{minipage}

\renewcommand{\arraystretch}{1.16}
\setlength{\tabcolsep}{4pt}

\begin{tabularx}{\linewidth}{
    >{\raggedright\arraybackslash}X
    >{\centering\arraybackslash}X
}
\toprule
\textbf{Dataset} & \(\boldsymbol{\varepsilon}\) \\
\midrule
\rowcolor{mmlusoft}
\mbox{MMLU-Pro} & 0.445 \\

\rowcolor{popsoft}
\mbox{PopQA} & 0.100 \\
\bottomrule
\end{tabularx}

\end{minipage}

\end{table*}

\begin{table*}[t]
\centering
\caption{
Complementary  metrics on the main Llama-3.3-70B-Instruct and Qwen3-32B runs.
Edit recall is the fraction of editable interactions that are selected; missed edits counts editable
interactions that are skipped; and edit precision is the fraction of selected interactions that require an edit.
Within each model--dataset block, the best value is shown in \textbf{bold} and the second-best is \underline{underlined}.
}
\label{tab:new-discovery-metrics}

\renewcommand{\arraystretch}{1.14}
\setlength{\tabcolsep}{5pt}
\scriptsize

\begin{tabularx}{\textwidth}{
    >{\raggedright\arraybackslash}p{1.35cm}
    >{\raggedright\arraybackslash}p{2.35cm}
    >{\raggedright\arraybackslash}p{1.55cm}
    >{\centering\arraybackslash}X
    >{\centering\arraybackslash}X
    >{\centering\arraybackslash}X
}
\toprule
\textbf{Dataset} &
\textbf{Model} &
\textbf{Method} &
\textbf{Edit recall \(\uparrow\)} &
\textbf{Missed edits \(\downarrow\)} &
\textbf{Edit precision \(\uparrow\)}
\\
\midrule

\rowcolor{mathsoft}
& & \EDITREP{} & \textbf{0.784} & \textbf{143} & \underline{0.467} \\
\rowcolor{mathsoft}
& & \EDITLLM{} & \underline{0.691} & \underline{205} & \textbf{0.473} \\
\rowcolor{mathsoft}
& & \EDITRAND{} & 0.151 & 563 & 0.253 \\
\rowcolor{mathsoft}
& \multirow{-4}{=}{\raggedright Llama-3.3-70B-Instruct}
& \LLMSelect{} & 0.621 & 251 & 0.437 \\
\rowcolor{mathsoft}
& & \EDITREP{} & \underline{0.810} & \underline{96} & \underline{0.366} \\
\rowcolor{mathsoft}
& & \EDITLLM{} & \textbf{0.939} & \textbf{31} & \textbf{0.407} \\
\rowcolor{mathsoft}
& & \EDITRAND{} & 0.190 & 410 & 0.192 \\
\rowcolor{mathsoft}
\multirow{-8}{*}{\mbox{MATH-500}}
& \multirow{-4}{=}{\raggedright Qwen3-32B}
& \LLMSelect{} & 0.765 & 119 & 0.342 \\
\midrule

\rowcolor{mmlusoft}
& & \EDITREP{} & \textbf{0.703} & \textbf{346} & \textbf{0.656} \\
\rowcolor{mmlusoft}
& & \EDITLLM{} & 0.292 & 826 & \underline{0.469} \\
\rowcolor{mmlusoft}
& & \EDITRAND{} & \underline{0.401} & \underline{699} & 0.466 \\
\rowcolor{mmlusoft}
& \multirow{-4}{=}{\raggedright Llama-3.3-70B-Instruct}
& \LLMSelect{} & 0.238 & 889 & 0.465 \\
\rowcolor{mmlusoft}
& & \EDITREP{} & \textbf{0.663} & \textbf{385} & \textbf{0.609} \\
\rowcolor{mmlusoft}
& & \EDITLLM{} & \underline{0.462} & \underline{614} & \underline{0.552} \\
\rowcolor{mmlusoft}
& & \EDITRAND{} & 0.242 & 866 & 0.455 \\
\rowcolor{mmlusoft}
\multirow{-8}{*}{\mbox{MMLU-Pro}}
& \multirow{-4}{=}{\raggedright Qwen3-32B}
& \LLMSelect{} & 0.445 & 634 & 0.530 \\
\midrule

\rowcolor{popsoft}
& & \EDITREP{} & \textbf{0.411} & \textbf{824} & 0.746 \\
\rowcolor{popsoft}
& & \EDITLLM{} & \underline{0.284} & \underline{1002} & \textbf{0.756} \\
\rowcolor{popsoft}
& & \EDITRAND{} & 0.109 & 1246 & 0.571 \\
\rowcolor{popsoft}
& \multirow{-4}{=}{\raggedright Llama-3.3-70B-Instruct}
& \LLMSelect{} & 0.223 & 1087 & \underline{0.752} \\
\rowcolor{popsoft}
& & \EDITREP{} & \textbf{0.394} & \textbf{1177} & 0.915 \\
\rowcolor{popsoft}
& & \EDITLLM{} & 0.215 & 1526 & \textbf{0.961} \\
\rowcolor{popsoft}
& & \EDITRAND{} & 0.142 & 1667 & 0.791 \\
\rowcolor{popsoft}
\multirow{-8}{*}{\mbox{PopQA}}
& \multirow{-4}{=}{\raggedright Qwen3-32B}
& \LLMSelect{} & \underline{0.260} & \underline{1438} & \underline{0.930} \\
\midrule

\rowcolor{gpqasoft}
& & \EDITREP{} & \textbf{0.644} & \textbf{407} & \textbf{0.589} \\
\rowcolor{gpqasoft}
& & \EDITLLM{} & \underline{0.615} & \underline{440} & \underline{0.576} \\
\rowcolor{gpqasoft}
& & \EDITRAND{} & 0.472 & 604 & 0.464 \\
\rowcolor{gpqasoft}
& \multirow{-4}{=}{\raggedright Llama-3.3-70B-Instruct}
& \LLMSelect{} & 0.306 & 793 & 0.448 \\
\rowcolor{gpqasoft}
& & \EDITREP{} & \textbf{0.627} & \textbf{513} & \underline{0.690} \\
\rowcolor{gpqasoft}
& & \EDITLLM{} & \underline{0.625} & \underline{516} & \textbf{0.696} \\
\rowcolor{gpqasoft}
& & \EDITRAND{} & 0.353 & 890 & 0.561 \\
\rowcolor{gpqasoft}
\multirow{-8}{*}{\mbox{GPQA}}
& \multirow{-4}{=}{\raggedright Qwen3-32B}
& \LLMSelect{} & 0.360 & 880 & 0.580 \\

\bottomrule
\end{tabularx}
\end{table*}

\subsection{List of LLM Prompts}
\label{app:prompt-list}

The experiments and audit use four prompt families. The first generates the candidate response
\(\widetilde Y\) for each benchmark question. The second is used by \LLMSelect{} to make one
budgeted selection decision for an entire batch, the third is used by \EDITLLM{} to produce
one scalar edit score for each prompt--response pair, and the fourth is the common evaluation
prompt used by GPT-5.6 Sol in the audit of benchmark-derived edit labels.

\subsubsection{Candidate-Response Generation Prompts}
\label{app:candidate-response-prompts}

\begin{datasetprompt}[title={MATH-500}]{mathblue}
\textbf{System instruction.}

{\ttfamily\small
You are a careful mathematical problem solver.

Hard requirements:\newline
- Solve the problem accurately.\newline
- End with exactly one final line in this format:\newline
\#\#\#\# <answer>

The final line must contain only the marker \#\#\#\# followed by the final answer.
}

\medskip
\textbf{User prompt.}

{\ttfamily\small
Solve the following math problem.\newline
End with exactly one final line in this format: \#\#\#\# <answer>

Problem:\newline
\{question\}
}
\end{datasetprompt}

\begin{datasetprompt}[title={MMLU-Pro}]{mmluteal}
\textbf{System instruction.}

{\ttfamily\small
You are a careful multiple-choice reasoning system.

Hard requirements:\newline
- Solve the question and choose exactly one option.\newline
- The final answer must be only the option letter.\newline
- End with exactly one final line in this format:\newline
\#\#\#\# <letter>

The final line must contain only the marker \#\#\#\# followed by a single letter such as
A, B, C, D, E, F, G, H, I, or J.
}

\medskip
\textbf{User prompt.}

{\ttfamily\small
Solve the following MMLU-Pro multiple-choice question.\newline
Choose the single best option. Think carefully if needed, but do not include reasoning in the
output.\newline
End with exactly one final line in this format: \#\#\#\# <letter>

\{question\}
}
\end{datasetprompt}

\begin{datasetprompt}[title={PopQA}]{poporange}
\textbf{System instruction.}

{\ttfamily\small
You are a careful open-domain factual question-answering system.

Hard requirements:\newline
- Answer the question accurately.\newline
- Provide only the final short answer, not a sentence or explanation.\newline
- The answer should be an entity, title, place, date, number, or short phrase.\newline
- End with exactly one final line in this format:\newline
\#\#\#\# <answer>

The final line must contain only the marker \#\#\#\# followed by the answer.
}

\medskip
\textbf{User prompt.}

{\ttfamily\small
Answer the following open-domain factual question from PopQA.\newline
Give only the final short answer; do not include explanation, reasoning, or extra text.\newline
The answer should be an entity, title, place, date, number, or short phrase.\newline
End with exactly one final line in this format: \#\#\#\# <answer>

Question:\newline
\{question\}
}
\end{datasetprompt}

\begin{datasetprompt}[title={GPQA}]{gpqapurple}
\textbf{System instruction.}

{\ttfamily\small
You are a careful graduate-level science reasoning system.

Hard requirements:\newline
- Solve the multiple-choice question and choose exactly one option.\newline
- The final answer must be only the option letter.\newline
- End with exactly one final line in this format:\newline
\#\#\#\# <letter>

The final line must contain only the marker \#\#\#\# followed by a single letter:
A, B, C, or D.
}

\medskip
\textbf{User prompt.}

{\ttfamily\small
Solve the following GPQA graduate-level science multiple-choice question.\newline
Choose the single best option. Think carefully if needed.\newline
End with exactly one final line in this format: \#\#\#\# <letter>

\{question\}
}
\end{datasetprompt}

\subsubsection{\LLMSelect{} Batch-Selection Prompts}
\label{app:llm-select-prompts}

\begin{promptouter}{Common prompt}
\textbf{System instruction.}

{\ttfamily\small
You are an expert edit-selection system. Your only task is to choose which model-generated answers
should be sent for human correction. Return only valid JSON. Do not solve the questions. Do not
include explanations.
}

\medskip
\textbf{User-prompt scaffold.}

{\ttfamily\small
You are selecting examples to send to a human editor. You will receive a batch of model-generated answers.
Select the items most likely to contain an error and therefore most worth sending to a human
corrector. Respect the total budget \{budget\}; each item has a listed cost. Select any subset whose
total cost is at most the budget. Use only the question/problem and the model answer. Do not use or
assume the gold answer. You may select zero items if none appear worth human correction. Return only
valid JSON in exactly this format: \{"selected\_indices": [0, 3, 4]\}. Do not include explanations,
markdown, or text outside the JSON object.

Batch:\newline
\{items\}
}
\end{promptouter}

\begin{datasetprompt}[title={MATH-500}]{mathblue}
{\ttfamily\small
The batch contains model-generated mathematics answers. Prefer examples with suspicious reasoning,
arithmetic mistakes, missing final answers, format violations, or unsupported conclusions. Each item
is formatted with an index, cost, problem, and model answer.
}
\end{datasetprompt}

\begin{datasetprompt}[title={MMLU-Pro}]{mmluteal}
{\ttfamily\small
Each item is a multi-option academic or professional reasoning question. The desired response is one
option letter such as A, B, C, \ldots, or J. Prefer examples with questionable reasoning, no valid
option, multiple conflicting options, unsupported certainty, or a final answer inconsistent with the
question and choices. Each item contains an index, cost, question and options, and model answer.
}
\end{datasetprompt}

\begin{datasetprompt}[title={PopQA}]{poporange}
{\ttfamily\small
The desired response is a short factual answer: an entity, title, place, date, number, or short phrase.
Prefer likely factual errors, missing answers, multiple conflicting answers, unsupported hedging,
non-short-answer formatting, or responses too vague to match a valid PopQA alias. Each item contains
an index, cost, question, and model answer.
}
\end{datasetprompt}

\begin{datasetprompt}[title={GPQA}]{gpqapurple}
{\ttfamily\small
Each item is a graduate-level science multiple-choice question with four options. The desired response
is one option letter: A, B, C, or D. Prefer questionable scientific reasoning, no valid option,
multiple conflicting options, unsupported certainty, or a final answer inconsistent with the question
and choices. Each item contains an index, cost, question and options, and model answer.
}
\end{datasetprompt}

\subsubsection{\EDITLLM{}  Edit-Score Prompts}
\label{app:edit-llm-prompts}

\begin{promptouter}{Common prompt}
\textbf{System instruction.}

{\ttfamily\small
You are an expert editor of model-generated answers. Estimate whether a human editor would need
to modify the model answer. Return only valid JSON. Do not include explanations, markdown, or extra
text.
}

\medskip
\textbf{User-prompt scaffold.}

{\ttfamily\small
Estimate the posterior edit probability for this prompt--response pair. The score is a probability in
$[0,1]$ that a human editor would modify the model answer. A high score means that correction is
likely to be required; a low score means that the answer is likely to be confirmed without
modification. Use only the question/problem and model answer below. Do not assume access to the gold
answer. Return only valid JSON in exactly this format: \{"score": 0.73\}.
}
\end{promptouter}

\begin{datasetprompt}[title={MATH-500}]{mathblue}
{\ttfamily\small
Treat mathematical errors, unsupported conclusions, missing final answers, ambiguity, and formatting
problems as evidence that human modification may be required.

Problem:\newline
\{question\}\newline
\newline
Model answer:\newline
\{model\_answer\}
}
\end{datasetprompt}

\begin{datasetprompt}[title={MMLU-Pro}]{mmluteal}
{\ttfamily\small
The expected response is one option letter such as A, B, C, \ldots, or J. Treat a likely wrong option,
an invalid option letter, ambiguity, unsupported reasoning, missing final answers, multiple conflicting
choices, or answer text without a clear option letter as evidence that correction may be required.

Question and options:\newline
\{question\}\newline
\newline
Model answer:\newline
\{model\_answer\}
}
\end{datasetprompt}

\begin{datasetprompt}[title={PopQA}]{poporange}
{\ttfamily\small
The expected response is a concise entity, title, place, date, number, or short phrase. Treat likely
factual errors, missing answers, excessive vagueness or specificity, malformed output, explanations,
multiple conflicting answers, unsupported hedging, and text unlikely to match a valid answer alias as
evidence that correction may be required.

Question:\newline
\{question\}\newline
\newline
Model answer:\newline
\{model\_answer\}
}
\end{datasetprompt}

\begin{datasetprompt}[title={GPQA}]{gpqapurple}
{\ttfamily\small
The expected response is one option letter: A, B, C, or D. Treat a likely wrong option, an invalid
letter, ambiguity, unsupported scientific reasoning, missing final answers, multiple conflicting
choices, or answer text without a clear option letter as evidence that correction may be required.

Question and options:\newline
\{question\}\newline
\newline
Model answer:\newline
\{model\_answer\}
}
\end{datasetprompt}

\subsubsection{GPT-5.6 Sol Human-Audit Prompt}
\label{app:gpt-audit-prompt}

\begin{promptouter}{Common prompt}
{\ttfamily\small
You are an independent editor of model responses. For each example, decide whether the response
would need to be modified before being accepted as a satisfactory answer to the user.

Assign:\newline
- A = 0 if the response is acceptable as-is.\newline
- A = 1 if the response genuinely requires editing before acceptance.

Judge the response based on correctness, reasoning quality, completeness, instruction following,
relevance, and clarity. Be conservative with A = 1: minor stylistic issues, verbosity, or wording
that could be improved should not count as requiring an edit if the response is still satisfactory.

Use the reference or ground-truth answer as supporting evidence, not as a strict string-matching
criterion. Accept equivalent answers, aliases, and reasonable interpretations of ambiguous questions.

For reasoning and mathematical tasks, evaluate the full response rather than only the final answer.
A correct final answer may still require editing if the reasoning contains substantive errors or
contradictions.

The key question is: ``Would you modify this response before accepting it as a satisfactory response
to the user?''
}
\end{promptouter}

\subsection{Additional Experiments with Qwen3-32B}
\label{app:qwen-experiments}

To test whether the main conclusions depend on the primary response-model family, we repeat the
main experiment with Qwen3-32B \citep{yang2025qwen3}. The response model, \EDITLLM{} scorer,
and \LLMSelect{} selector are changed consistently to Qwen3-32B; the online protocol and metric
computation remain otherwise unchanged.

Figure~\ref{fig:main-qwen-results} shows that the main qualitative trends persist with Qwen3-32B.
As in the main experiments, \EDITREP{} generally achieves lower Type-II error than
\LLMSelect{} across the four datasets, with particularly clear improvements on MMLU-Pro and GPQA.
This suggests that the explicit decomposition into an edit score, online Type-I feedback, and budget
control remains effective when the underlying response model is changed.
The comparison among the \EDIT{} variants again highlights the importance of score quality.
\EDITREP{} and \EDITLLM{} substantially outperform \EDITRAND{} on MATH-500, MMLU-Pro,
and PopQA, while their relative ordering varies across datasets. Overall, these results show that the
benefits of informative edit scores, together with the \EDIT{} controller, are not specific to the
underlying response model used in our main experiments.

\begin{figure*}[t]
    \centering
    \includegraphics[width=\textwidth]{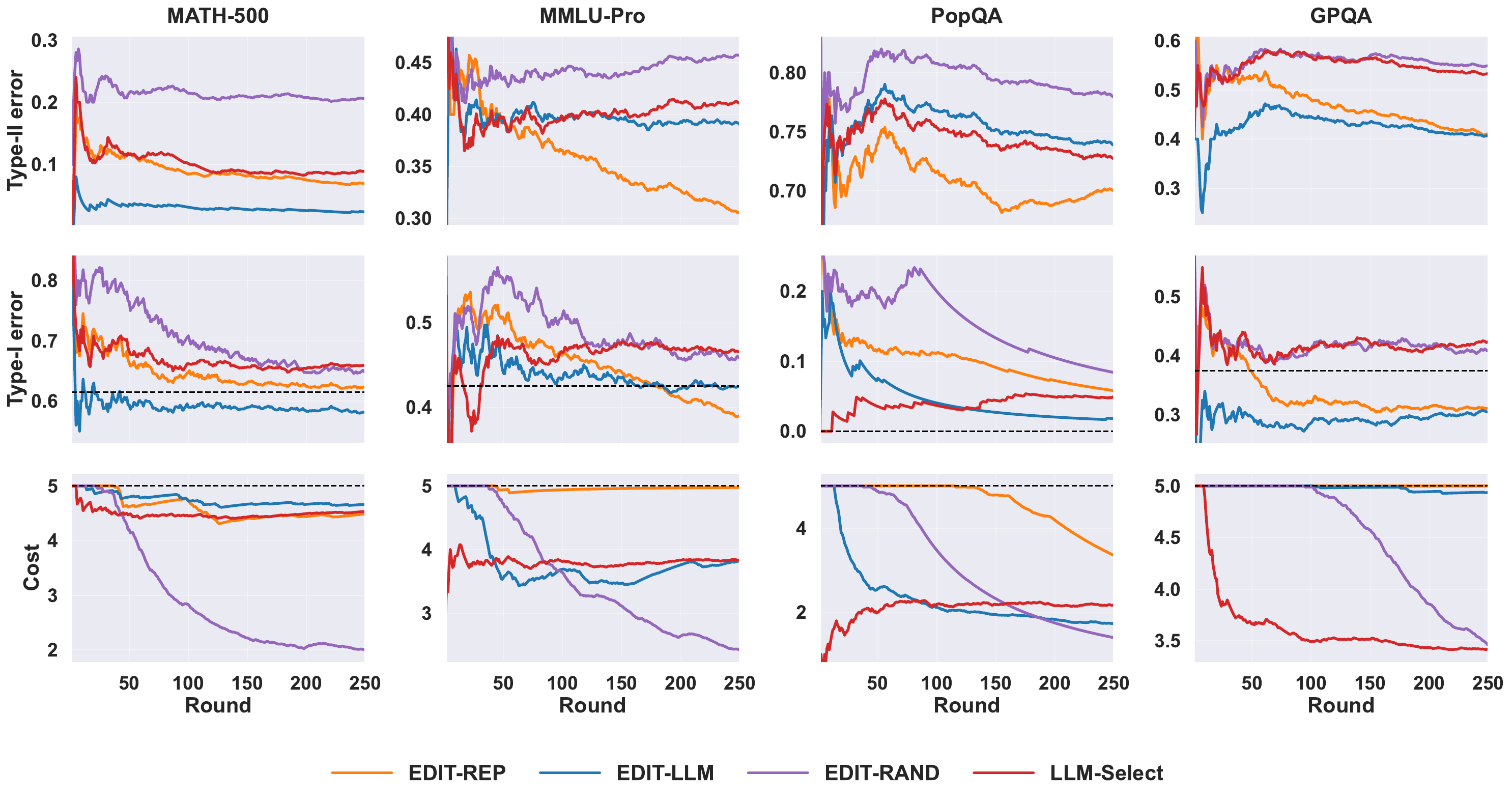}
    \caption{
    Main results with Qwen3-32B.
    Columns correspond to MATH-500, MMLU-Pro, PopQA, and GPQA; rows show running-average Type-II error,
    Type-I error, and average editor cost. Dashed lines indicate the configured Type-I error tolerance
    and per-round editor budget. 
    }
    \label{fig:main-qwen-results}
\end{figure*}

\subsection{Response-Length-Aware Editor Costs}
\label{app:length-aware-results}

Unit editor cost abstracts away an important practical asymmetry: checking a long derivation usually
requires more editor effort than validating a short factual response. We therefore define the
response-length cost
\begin{equation}
    c_{\mathrm{len}}(z)
    =
    \operatorname{clip}\!\left(
        \frac{L(\widetilde Y)}
        {\operatorname{median}_{z'}L(\widetilde Y')},
        0.25,
        4.0
    \right),
    \label{eq:review-length-cost}
\end{equation}
where \(L(\widetilde Y)\) is the response word count and the median is computed within the
corresponding generation. We retain \(B_t=5\), so the number of examples
that fit within the budget depends on their individual editor costs. The Type-I error tolerances used in
these experiments are reported in Table~\ref{tab:length-cost-epsilon-values}.

Figure~\ref{fig:review-length-cost-distribution} shows the resulting cost distributions. MATH-500
has a pronounced right tail because some responses contain long derivations, whereas PopQA is more
concentrated around unit cost, with a smaller number of long-form outliers. This variation illustrates
why assigning the same editor cost to every interaction may not accurately reflect the editor effort
required to evaluate different responses.

\begin{figure*}[t]
    \centering
    \includegraphics[width=\textwidth]{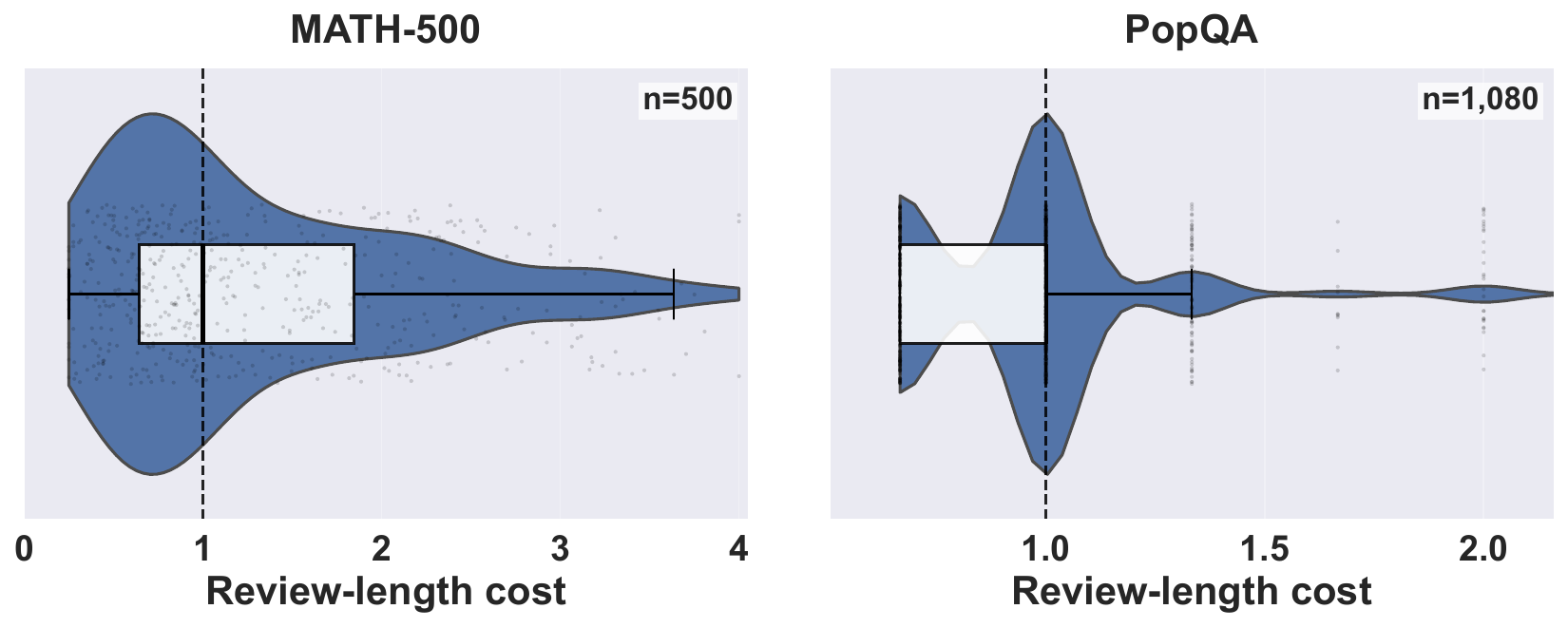}
    \caption{
    Distribution of normalized response-length editor costs.
    The violin, box, and point layers show the empirical cost distributions for MATH-500 (left) and PopQA (right).
    The dashed vertical line marks unit cost, corresponding to the median response length within the
    relevant generation; costs are clipped to the interval \([0.25,4.0]\).
    }
    \label{fig:review-length-cost-distribution}
\end{figure*}

\begin{figure*}[t]
    \centering
    \includegraphics[width=\textwidth]{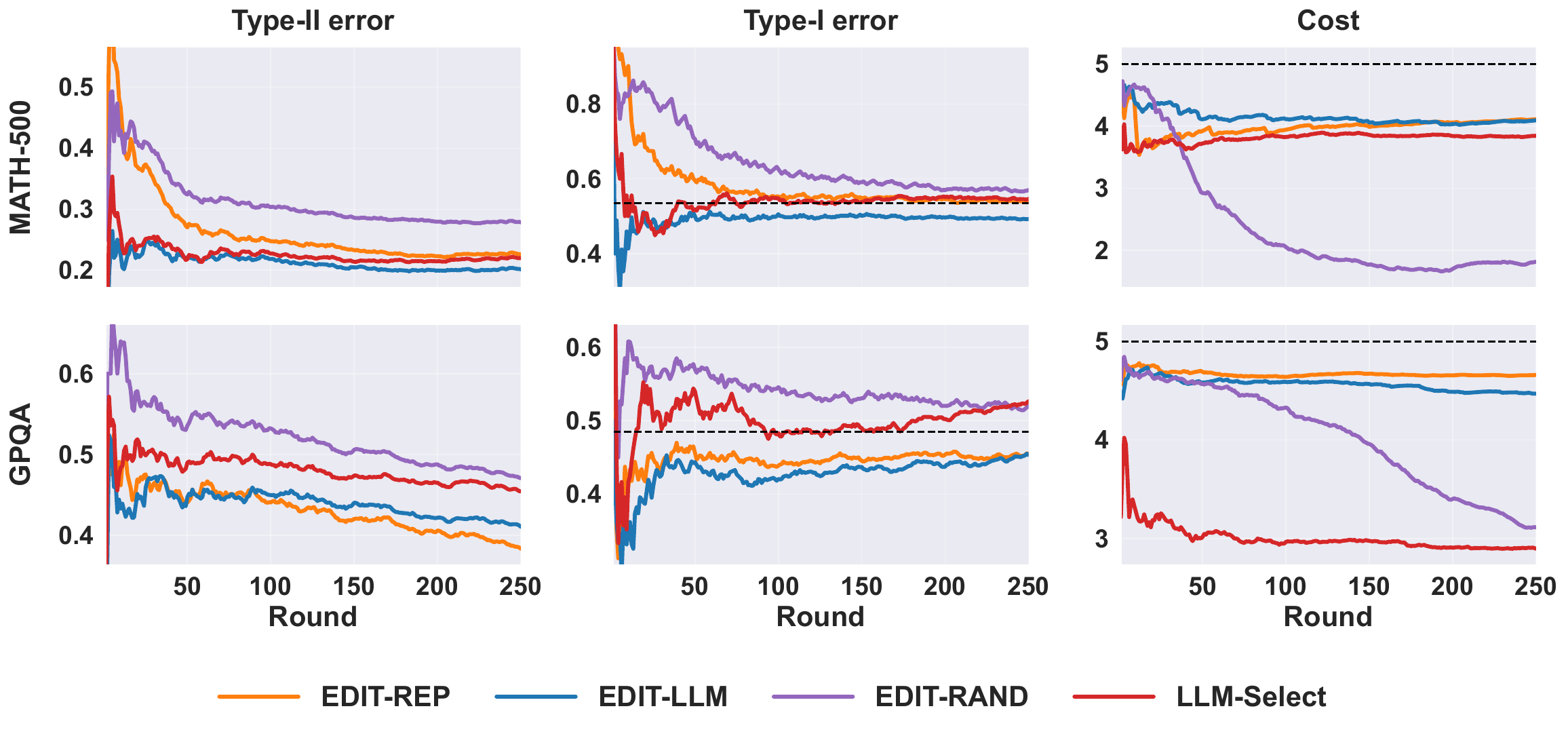}
    \caption{\textbf{Edit discovery with response-length-dependent editor cost.}
    Results use Llama-3.3-70B-Instruct on MATH-500 and GPQA with the cost in
    Equation~\eqref{eq:review-length-cost} and budget \(B_t=5\). Rows report running-average Type-II
    error, Type-I error, and average editor cost. Dashed lines denote the configured
    Type-I error tolerance and per-round editor budget.}
    \label{fig:review-length-cost-main}
\end{figure*}

Figure~\ref{fig:review-length-cost-main} shows that the cost-aware structure of \EDIT{} remains effective when
editor effort varies across interactions. On MATH-500, both semantically informed \EDIT{} variants
substantially reduce Type-II error relative to the random-score control. On GPQA, \EDITREP{} achieves
the strongest late-stage discovery performance while remaining below the displayed Type-I error tolerance.
Although \LLMSelect{} often spends less, that apparent saving comes with more missed edits. The cost
term is therefore consequential rather than cosmetic: \EDIT{} directs a fixed amount of editor resources
toward interactions with the strongest estimated correction value after accounting for editor burden.

\subsection{Additional Experiments under Distribution Shift}
\label{app:distribution-shift}

The main experiments randomly mix examples from a fixed benchmark pool. We additionally consider a
nonstationary setting in which the distribution of the stream changes over time while the policy is
not reset. For MMLU-Pro, the stream moves through \texttt{STEM}, \texttt{life/health},
\texttt{humanities/social science}, and \texttt{law/other subject} groups. For PopQA, it moves from
\texttt{head} entities to \texttt{middle-popularity} and then \texttt{long-tail} entities. All methods
receive the same sequence of phases and the same shift boundaries. The response-model accuracies and
Type-I error tolerances for these experiments are reported in
Tables~\ref{tab:shift-model-accuracy} and~\ref{tab:shift-epsilon-values}, respectively.

\begin{figure*}[t]
    \centering
    \includegraphics[width=0.9\textwidth]{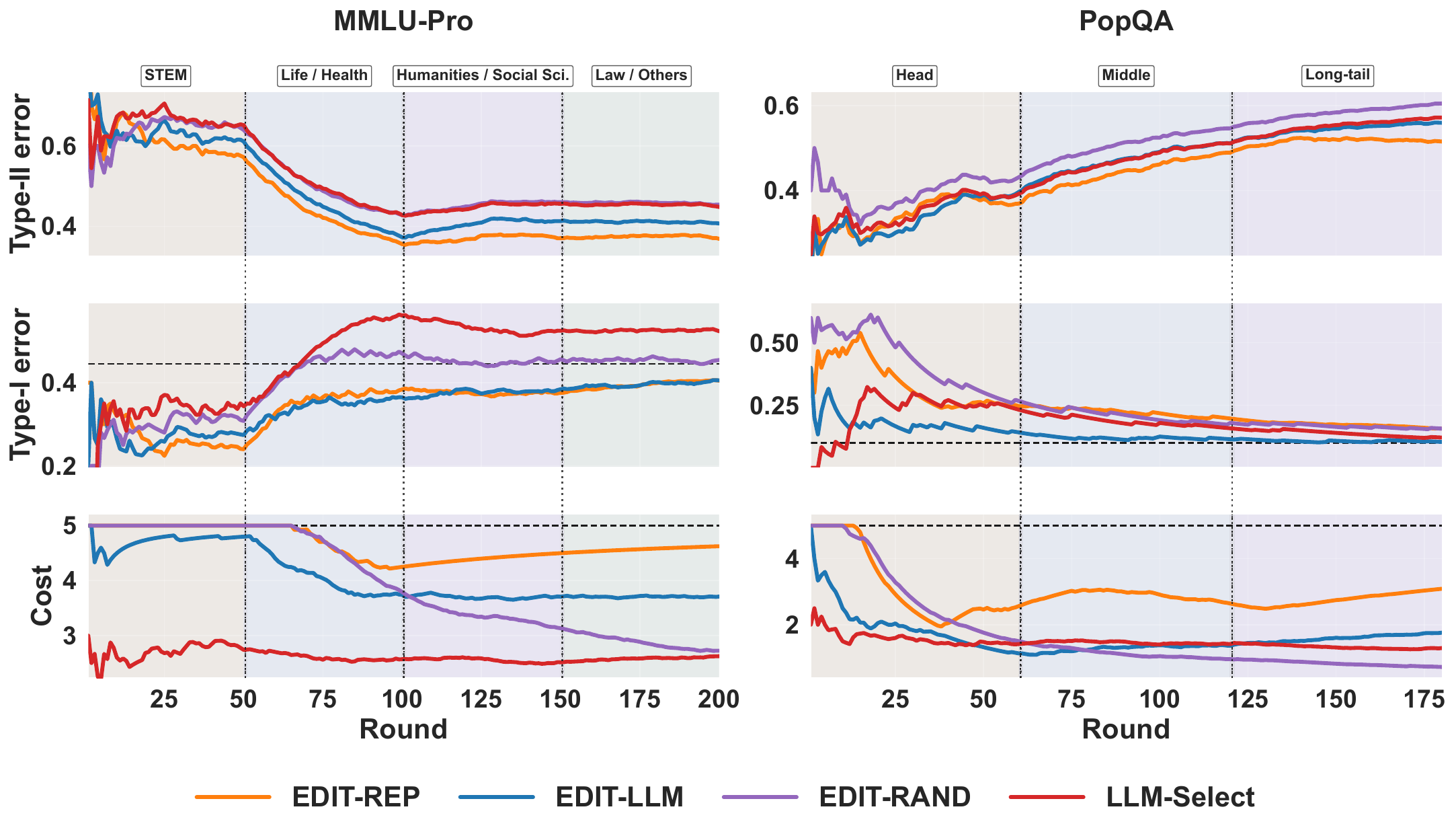}
    \caption{Robustness under structured distribution shift.
    Vertical dotted lines mark phase boundaries in the MMLU-Pro subject stream and the PopQA
    popularity stream. Rows report running-average Type-II error, Type-I error, and average editor
    cost. Dashed horizontal lines indicate
    the configured Type-I error tolerance and per-round editor budget.}
    \label{fig:distribution-shift-main}
\end{figure*}

As Figure~\ref{fig:distribution-shift-main} shows, \EDITREP{} generally attains lower Type-II error
than \LLMSelect{} while regulating Type-I error more reliably across phase changes. The separation between \EDITRAND{} and the two informed scores persists after the shifts, confirming
that adaptation of the threshold cannot compensate for an uninformative ranking. Conversely, the stable
Type-I trajectories across score variants illustrate the complementary roles predicted by the theory:
the score determines which interactions receive review, while the controller determines how
selective the policy should be.

\subsection{Effect of Embedding-Model Size}
\label{app:embedding-size}

The main experiments use a compact MiniLM encoder to represent each prompt--response pair in
\EDITREP{}. We additionally study how the capacity of the frozen representation model affects
online edit discovery. We replace the default encoder with Qwen3 models of three sizes:
Qwen3-0.6B, Qwen3-1.7B, and Qwen3-4B. In all cases, the response model, online linear probe,
selection rule, Type-I error tolerance, budget, stream order, and remaining hyperparameters are held fixed.
Thus, the only experimental difference is the model used to construct the representation
\(\phi(Z)\).

\begin{figure*}[t]
    \centering
    \includegraphics[width=0.9\textwidth]
    {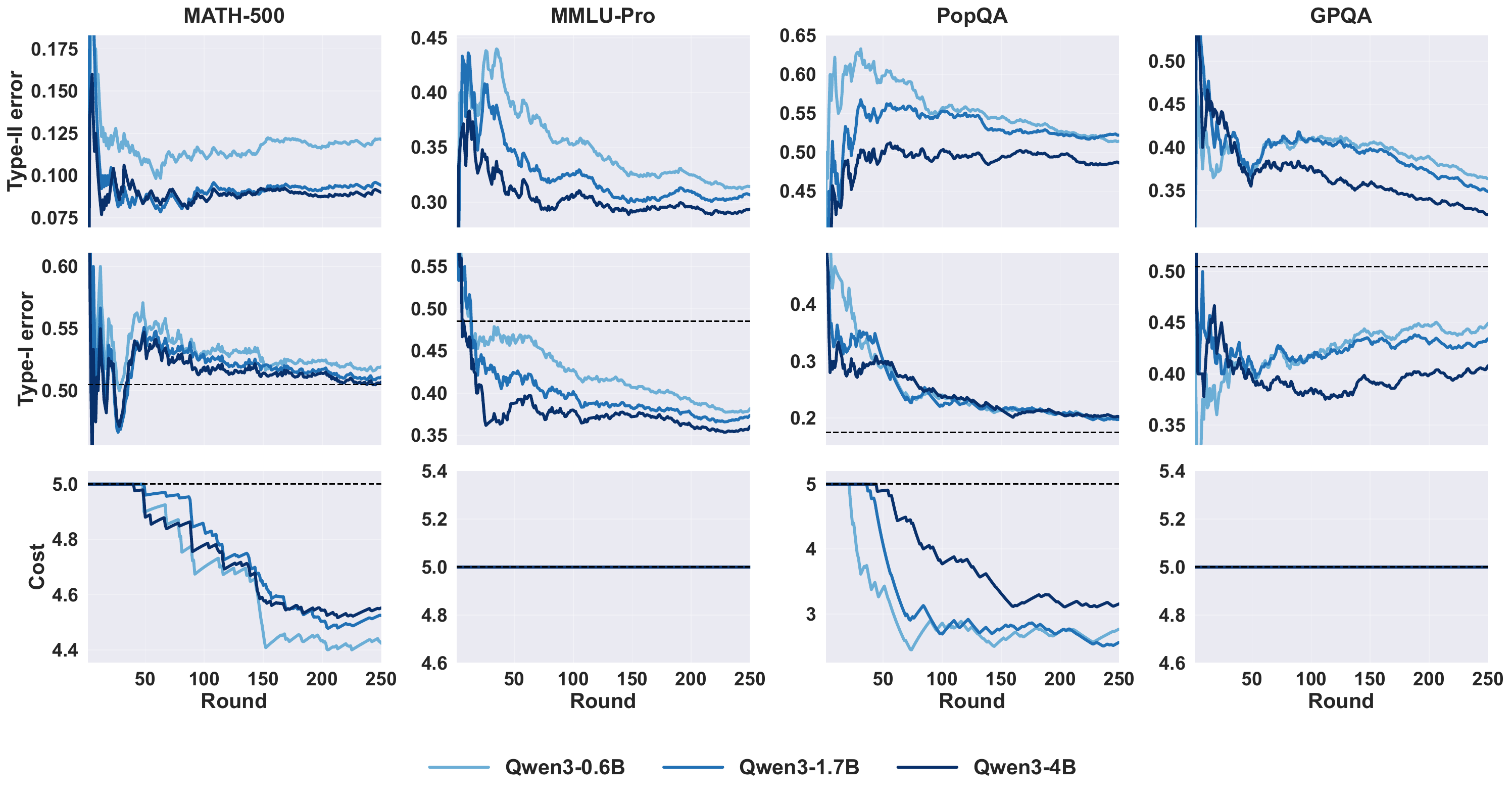}
    \caption{
    Effect of the embedding-model size on \EDITREP{}.
    Results use Llama-3.3-70B-Instruct as the response model and Qwen3-0.6B,
    Qwen3-1.7B, or Qwen3-4B as the frozen representation model. Columns correspond to
    MATH-500, MMLU-Pro, PopQA, and GPQA. Rows report running-average Type-II error,
    Type-I error, and average editor cost per round. Dashed horizontal lines indicate
    the configured Type-I error tolerance and per-round editor budget.
    }
    \label{fig:qwen-embedding-size}
\end{figure*}

Figure~\ref{fig:qwen-embedding-size} shows that increasing the embedding-model size generally
improves edit discovery. Qwen3-4B achieves the lowest final Type-II error on all four datasets,
with the clearest improvements on PopQA and GPQA. Qwen3-1.7B also consistently improves over
Qwen3-0.6B and approaches the performance of Qwen3-4B on MATH-500 and MMLU-Pro. These results
indicate that larger representation models provide features that better separate interactions likely
to require correction.

The Type-I trajectories remain  similar across encoder sizes, showing that the adaptive
threshold continues to regulate the confirmation rate independently of the representation model.
On MMLU-Pro and GPQA, all three variants use essentially the same editor budget, so the reduction
in Type-II error can be attributed directly to improved ranking rather than increased expenditure.
On MATH-500 and PopQA, the expenditure trajectories differ modestly, but the larger encoders still
provide a stronger  Type-I--Type-II trade-off. Overall, the experiment shows that embedding
capacity affects the efficiency of \EDITREP{}, while the online controller still 
regulates the edit quality and budget use.

\end{document}